\newif\ifsubmission
\newif\ifappendix
\NewCommandCopy{\proofqedsymbol}{\qedsymbol}
\newcommand{\exampleqedsymbol}{{$\triangle$}}
\renewcommand{\qedsymbol}{\exampleqedsymbol}%
\definecolor{Dark Ruby Red}{HTML}{5d1416}
\definecolor{Dark Blue Sapphire}{HTML}{003c47} 
\definecolor{Dark Gamboge}{HTML}{be7c00}
\knowledgenewrobustcmd{\dcup}{\mathbin{\cmdkl{\uplus}}} 
\knowledgenewrobustcmd{\bigdcup}{\mathop{\cmdkl{\biguplus}}} 
\newcommand{\set}[1]{\{#1\}}
\newcommand{\partsof}[1]{2^{#1}}
\newrobustcmd{\Nat}{\mathbb{N}}
\knowledgenewrobustcmd{\Sh}{\cmdkl{\mathrm{Sh}}} 
\knowledgenewrobustcmd{\omqsat}{\mathrel{\cmdkl{\models}}} 
\knowledgenewrobustcmd{\polyrx}{\mathrel{\cmdkl{\le_{\mathsf{P}}}}} 
\knowledgenewrobustcmd{\polyeq}{\mathrel{\cmdkl{\equiv_{\mathsf{P}}}}} 
\knowledgenewrobustcmd{\homto}[1][]{\mathrel{\cmdkl{\xrightarrow{\smash{\textit{\tiny #1 \!hom}}}}}} 
\knowledgenewrobustcmd{\dom}{\cmdkl{\mathrm{dom}}} 
\knowledgenewrobustcmd{\domain}{\cmdkl{\mathit{dom}}} 
\newcommand{\subendo}{{\textup{\textsf{n}}}}
\newcommand{\subexo}{{\textup{\textsf{x}}}}
\knowledgenewrobustcmd{\Dn}[1][\D]{#1_{\cmdkl{\subendo}}}
\knowledgenewrobustcmd{\Dx}[1][\D]{#1_{\cmdkl{\subexo}}}
\knowledgenewrobustcmd{\scorefun}[1][]{\cmdkl{\mathbf{\xi}_{#1}}} 
\definecolor{light-gray}{gray}{0.9}
\newcommand{\proofcase}[1]{\noindent\colorbox{light-gray}{#1}~~}
\newcommand{\Ra}{\Rightarrow}
\newcommand{\defeq}{\vcentcolon=}
\renewcommand{\le}{\leqslant}
\renewcommand{\ge}{\geqslant}
\newcommand{\ic}{\sqsubseteq} 
\newcommand{\A}{\mathcal{A}}
\newcommand{\C}{\mathcal{C}}
\newcommand{\D}{\mathcal{D}}
\newcommand{\I}{\mathcal{I}}
\newcommand{\J}{\mathcal{J}}
\newcommand{\K}{\mathcal{K}}
\renewcommand{\L}{\mathcal{L}}
\newcommand{\Q}{\mathcal{Q}}
\newcommand{\lQ}{\mathbb{Q}} 
\newcommand{\R}{\mathcal{R}}
\knowledgenewrobustcmd{\Sym}{\cmdkl{\mathfrak{S}}} 
\newcommand{\T}{\mathcal{T}}
\def\Amsf{A}
\def\rmsf{r}
\def\Gmsf{G}
\def\Hmsf{H}
\knowledgenewrobustcmd{\dnames}{\cmdkl{\mathsf{N_D}}}
\knowledgenewrobustcmd{\cnames}{\cmdkl{\mathsf{N_C}}}
\knowledgenewrobustcmd{\rnames}{\cmdkl{\mathsf{N_R}}}
\knowledgenewrobustcmd{\inames}{\cmdkl{\mathsf{N_I}}}
\knowledgenewrobustcmd{\nulls}{\cmdkl{\mathsf{N_A}}}
\knowledgenewrobustcmd{\vnames}{\cmdkl{\mathsf{N_V}}}
\knowledgenewrobustcmd{\irnames}{\cmdkl{\mathsf{N^{\pm}_R}}}
\knowledgenewrobustcmd{\NRpm}{\cmdkl{\mathsf{N}_R^{\pm}}}
\knowledgenewrobustcmd{\terms}{\cmdkl{\mathsf{terms}}}
\knowledgenewrobustcmd{\mods}{\cmdkl{\mathsf{Mod}}}
\knowledgenewrobustcmd{\canmod}[1]{\cmdkl{\I_{#1}}}
\knowledgenewrobustcmd{\mindl}{\cmdkl{\ensuremath{\mathcal{L}_{\mathsf{min}}}}}
\knowledgenewrobustcmd{\dllitec}{\cmdkl{\ensuremath{\text{DL-Lite}_{\mathsf{core}}}}}
\knowledgenewrobustcmd{\dlliter}{\cmdkl{\ensuremath{\text{DL-Lite}_{\mathcal{R}}}}}
\knowledgenewrobustcmd{\EL}{\cmdkl{\mathcal{EL}}}
\knowledgenewrobustcmd{\ELI}{\cmdkl{\mathcal{ELI}}}
\knowledgenewrobustcmd{\horndl}{\cmdkl{\mathcal{ELHIF}_{\bot}}}
\newenvironment{equation-inline}{\refstepcounter{equation}}{\hfill(\theequation)\\}
\knowledgenewrobustcmd{\atoms}{\cmdkl{\textit{atoms}}}
\knowledgenewrobustcmd{\vars}{\cmdkl{\textit{vars}}}
\knowledgenewrobustcmd{\const}{\cmdkl{\textit{ind}}}
\knowledgenewrobustcmd{\mterms}{\cmdkl{\textit{term}}}
\knowledgenewrobustcmd{\withT}[1]{(\T,#1)} 
\knowledgenewrobustcmd{\IQ}{\cmdkl{\color{red}\mathsf{IQ}}} 
\knowledgenewrobustcmd{\AQ}{\cmdkl{\mathsf{AQ}}} 
\knowledgenewrobustcmd{\CQ}{\cmdkl{\mathsf{CQ}}} 
\knowledgenewrobustcmd{\FO}{\cmdkl{\mathsf{FO}}} 
\knowledgenewrobustcmd{\sjfCQ}{\cmdkl{\mathsf{sjf\text{-}CQ}}} 
\knowledgenewrobustcmd{\indepCQ}[1]{\mathsf{\cmdkl{indep\text{-}(}#1,\CQ)}} 
\knowledgenewrobustcmd{\CQneq}{\cmdkl{\mathsf{CQ}^{\neq}}} 
\knowledgenewrobustcmd{\CQeq}{\cmdkl{\mathsf{CQ}^{=}}} 
\knowledgenewrobustcmd{\CQeqneq}{\cmdkl{\mathsf{CQ}^{\neq,=}}} 
\knowledgenewrobustcmd{\UCQ}{\cmdkl{\mathsf{UCQ}}} 
\knowledgenewrobustcmd{\UCQneq}{\cmdkl{\mathsf{UCQ}^{\neq}}} 
\knowledgenewrobustcmd{\CRPQ}{\cmdkl{\mathsf{CRPQ}}} 
\knowledgenewrobustcmd{\UCRPQ}{\cmdkl{\mathsf{UCRPQ}}} 
\knowledgenewrobustcmd{\ACQ}{\cmdkl{\mathsf{ACQ}}} 
\knowledgenewrobustcmd{\sjfACQ}{\cmdkl{\mathsf{sjf\text{-}ACQ}}} 
\knowledgenewrobustcmd{\lincq}{\cmdkl{\mathsf{LinCQ}}}
\knowledgenewrobustcmd{\sjflincq}{\cmdkl{\mathsf{sjf\text{-}LinCQ}}}
\knowledgenewrobustcmd{\sjfstarcq}{\cmdkl{\mathsf{sjf\text{-}StarCQ}}}
\knowledgenewrobustcmd{\interfree}[1][\dlliter, \CQ]{\cmdkl{\mathsf{itf\text{-}}(#1)}} 
\knowledgenewrobustcmd{\Minsups}[1]{\cmdkl{\mathsf{MS}_{#1}}} 
\knowledgenewrobustcmd{\aC}{\cmdkl{C}} 
\knowledgenewrobustcmd{\AG}[1][]{\cmdkl{\A^{G}_{#1}}} 
\knowledgenewrobustcmd{\anon}{\cmdkl{\mathsf{anon}}} 
\knowledgenewrobustcmd{\withanon}[1]{{#1}^{\cmdkl{\circ}}} 
\knowledgenewrobustcmd{\suppfacts}{\cmdkl{\mathcal{SF}}} 
\knowledgenewrobustcmd{\modelsmu}[1][\mu]{\cmdkl{\models}_{#1}} 
\knowledgenewrobustcmd{\countAns}[1]{\cmdkl{\#_{#1}^{\textsf{hom}}}}
\knowledgenewrobustcmd{\countMS}[1]{\cmdkl{\#_{#1}^{\textsf{ms}}}}
\knowledgenewrobustcmd{\countFMS}[1]{\cmdkl{\#_{#1}^{\textsf{fms}}}}
\knowledgenewrobustcmd{\evalCountMS}[1]{\cmdkl{\textsc{eval-}\#_{#1}^{\textsf{ms}}}}
\knowledgenewrobustcmd{\evalCountFMS}[1]{\cmdkl{\textsc{eval-}\#_{#1}^{\textsf{fms}}}}
\knowledgenewrobustcmd{\eval}{\cmdkl{\textsc{eval-}}}
   \knowledgenewrobustcmd{\paramscorefun}[2][]{\cmdkl{\mathbf{\xi}}^{#2}_{#1}} 
   \knowledgenewrobustcmd{\PARAMscorefun}[2][]{\cmdkl{\Xi}^{#2}} 
\knowledgenewrobustcmd{\paramShapley}[2]{\cmdkl{\textup{SVC}^{#2}_{#1}}} 
   \knowledgenewrobustcmd{\dscorefun}[1][]{\cmdkl{\mathbf{\xi}}^{\cmdkl{\textsf{dr}}}_{#1}} 
   \knowledgenewrobustcmd{\Dscorefun}[1][]{\cmdkl{\Xi}^{\cmdkl{\textsf{dr}}}} 
   \knowledgenewrobustcmd{\dShapley}[1]{\cmdkl{\textup{SVC}^{\textsf{dr}}_{#1}}} 
\newcommand{\minsupindex}{\textsf{ms}}
\knowledgenewrobustcmd{\msscorefun}[1][]{\cmdkl{\mathbf{\xi}}^{\cmdkl{\minsupindex}}_{#1}} 
\knowledgenewrobustcmd{\MSscorefun}[1][]{\cmdkl{\Xi}^{\cmdkl{\minsupindex}}} 
\knowledgenewrobustcmd{\msShapley}[1]{\cmdkl{\textup{SVC}^{\minsupindex}_{#1}}} 
   \knowledgenewrobustcmd{\wscorefun}[2][]{\cmdkl{\mathbf{\xi}}^{#2}_{#1}} 
   \knowledgenewrobustcmd{\Wscorefun}[2][]{\cmdkl{\Xi}^{#2}} 
   \knowledgenewrobustcmd{\wShapley}[2]{\cmdkl{\textup{SVC}^{#2}_{#1}}} 
   \knowledgenewrobustcmd{\Sscorefun}[1][]{\cmdkl{\Xi}^{\cmdkl{\textsf{s}}}} 
   \knowledgenewrobustcmd{\SHARPscorefun}[1][]{\cmdkl{\Xi}^{\cmdkl{\textsf{\#}}}} 
\knowledgenewrobustcmd{\MVC}{\cmdkl{\textsc{\#MinVertexCover}}} 
\knowledgenewrobustcmd{\Shapley}[1]{\cmdkl{\textup{\color{red}SVC}_{#1}}} 
\knowledgenewrobustcmd{\Shapleyn}[1]{\cmdkl{\textup{\color{red}SVC}^{\textsf{\textup{n}}}_{#1}}} %
\knowledgenewrobustcmd{\Ptime}{\cmdkl{\mathsf{P}}} 
\knowledgenewrobustcmd{\BPP}{\cmdkl{\mathsf{BPP}}} 
\knowledgenewrobustcmd{\FP}{\cmdkl{\mathsf{FP}}} 
\knowledgenewrobustcmd{\sP}{\cmdkl{\mathsf{\#P}}} 
\knowledgenewrobustcmd{\FPsP}{\cmdkl{\mathsf{FP}^\mathsf{\#P}}} 
\knowledgenewrobustcmd{\NP}{\cmdkl{\mathsf{NP}}} 
\knowledgenewrobustcmd{\sNP}{\cmdkl{\mathsf{\#NP}}} 
\knowledgenewrobustcmd{\FPsNP}{\cmdkl{\mathsf{FP}^{\mathsf{\#NP}}}} 
\knowledgenewrobustcmd{\PH}{\cmdkl{\mathsf{PH}}} 
\knowledgenewrobustcmd{\sPH}{\cmdkl{\mathsf{\#PH}}} 
\knowledgenewrobustcmd{\FPsPH}{\cmdkl{\mathsf{FP}^{\mathsf{\#PH}}}} 
\knowledgenewrobustcmd{\sigmaless}[1]{\cmdkl{\sigma}_{\!\cmdkl{<}#1}}
\knowledgenewrobustcmd{\sigmaleq}[1]{\cmdkl{\sigma}_{\!\cmdkl{\le}#1}}
\knowledgenewrobustcmd{\sjfAUCQ}{\cmdkl{\textsf{sjf-AUCQ}}} 
\knowledgenewrobustcmd{\poneone}[1]{\cmdkl{q_{#1}^{\exists!}}}%
\knowledgenewrobustcmd{\qevalD}[2]{\cmdkl{#1(#2)}}
\knowledgenewrobustcmd{\Auto}[1]{\cmdkl{\textnormal{Aut}}(#1)}
\knowledgenewrobustcmd{\Qneq}{\cmdkl{\mathcal{Q}^{\neq}}}
\tikzset{
	subtree/.pic={
		\coordinate (-a) at (0,0);
		\coordinate (-b) at (-1,-2);
		\coordinate (-c) at (1,-2);
		\coordinate (-d) at (-0.35,-0.35);
		\coordinate (-e) at (0.35,-0.35);
		\coordinate (-h) at (-1.26,-2.16);
		\coordinate (-i) at (1.26,-2.16);
		\coordinate (-j) at (0.35,0.35);
		\coordinate (-k) at (-0.35,0.35);
		\coordinate (-west) at (-0.4,0);
		\coordinate (-east) at (0.4,0);
		
		\draw (-a) -- (-b) -- (-c) -- cycle;
		\draw[fill=white] (-a) circle (0.4);
	}
}
\tikzset{
	subtree_small/.pic={
		\coordinate (-a) at (0,0);
		\coordinate (-b) at (-.75,-1);
		\coordinate (-c) at (.75,-1);
		\coordinate (-d) at (-0.35,-0.35);
		\coordinate (-e) at (0.35,-0.35);
		\coordinate (-h) at (-0.92,-1.06);
		\coordinate (-i) at (0.92,-1.06);
		\coordinate (-j) at (0.35,0.35);
		\coordinate (-k) at (-0.35,0.35);
		\coordinate (-west) at (-0.4,0);
		\coordinate (-east) at (0.4,0);
		
		\draw (-a) -- (-b) -- (-c) -- cycle;
		\draw[fill=white] (-a) circle (0.4);
	}
}
\tikzset{
	subtrinvisible/.pic={
		\coordinate (-a) at (0,0);
		\coordinate (-b) at (-1,-2);
		\coordinate (-c) at (1,-2);
		\coordinate (-d) at (-0.35,-0.35);
		\coordinate (-e) at (0.35,-0.35);
		\coordinate (-h) at (-1.26,-2.16);
		\coordinate (-i) at (1.26,-2.16);
		\coordinate (-j) at (0.35,0.35);
		\coordinate (-k) at (-0.35,0.35);
		\coordinate (-west) at (-0.4,0);
		\coordinate (-east) at (0.4,0);
	}
}
\tikzset{
	graphbox/.pic={
		\node [draw, circle, minimum height=17pt] (-0) at (-1, 0.75) {};
		\node [draw, circle, minimum height=17pt] (-1) at (-1, -0.25) {};
		\node [draw, circle, minimum height=17pt] (-2) at (-1, -1.25) {};
		\node [draw, circle, minimum height=17pt] (-3) at (1, -1.25) {};
		\node [draw, circle, minimum height=17pt] (-4) at (1, -0.25) {};
		\node [draw, circle, minimum height=17pt] (-5) at (1, 0.75) {};
		\node [] (-6) at (0, 1.25) {\Huge $G$};
		\coordinate (-7) at (-1.5, -1.75) {};
		\coordinate (-8) at (1.5, 1.75) {};
		\coordinate (-north) at (0, 1.75);
		\coordinate (-west) at (-1.5, 0);
		\coordinate (-east) at (1.5, 0);
		\coordinate (-south) at (0, -1.75);
		
		\draw (-8) rectangle (-7);
		\draw (-0) to (-5);
		\draw (-1) to (-3);
		\draw (-2) to (-4);
	}
}
\tikzset{
	graphbox-ns/.pic={
		\coordinate (-7) at (-1.5, -1.75) {};
		\coordinate (-8) at (1.5, 1.75) {};
		\draw[fill=white] (-8) rectangle (-7);
		
		\node [draw, circle, minimum height=17pt] (-0) at (-1, 0.75) {};
		\node [draw, circle, minimum height=17pt] (-1) at (-1, -0.25) {};
		\node [draw, circle, minimum height=17pt] (-2) at (-1, -1.25) {};
		\node [draw, circle, minimum height=17pt] (-3) at (1, -1.25) {};
		\node [draw, circle, minimum height=17pt] (-4) at (1, -0.25) {};
		\node [draw, circle, minimum height=17pt] (-5) at (1, 0.75) {};
		\node [] (-6) at (0, 1.25) {\Huge $G$};
		\coordinate (-north) at (0, 1.75);
		\coordinate (-west) at (-1.5, 0);
		\coordinate (-east) at (1.5, 0);
		\coordinate (-south) at (0, -1.75);
		
		\draw[very thick,double distance=3pt] (-north) -- ++(0,0.7);
		\draw[very thick,double distance=3pt,arrows = {-Implies[]}] (-south) -- ++(0,-1.6);
		\draw (-8) rectangle (-7);
		\draw (-0) to (-5);
		\draw (-1) to (-3);
		\draw (-2) to (-4);
	}
}
\tikzset{
	graphbox-ew/.pic={
		\node [draw, circle, minimum height=17pt] (-0) at (-1, 0.75) {};
		\node [draw, circle, minimum height=17pt] (-1) at (-1, -0.25) {};
		\node [draw, circle, minimum height=17pt] (-2) at (-1, -1.25) {};
		\node [draw, circle, minimum height=17pt] (-3) at (1, -1.25) {};
		\node [draw, circle, minimum height=17pt] (-4) at (1, -0.25) {};
		\node [draw, circle, minimum height=17pt] (-5) at (1, 0.75) {};
		\node [] (-6) at (0, 1.25) {\Huge $G$};
		\coordinate (-7) at (-1.5, -1.75) {};
		\coordinate (-8) at (1.5, 1.75) {};
		\coordinate (-north) at (0, 1.75);
		\coordinate (-west) at (-1.5, 0);
		\coordinate (-east) at (1.5, 0);
		\coordinate (-south) at (0, -1.75);
		
		\draw[thick,double distance=3pt] (-west) -- ++(-1,0);
		\draw[thick,double distance=3pt,arrows = {-Implies[]}] (-east) -- ++(1,0);
		\draw (-8) rectangle (-7);
		\draw (-0) to (-5);
		\draw (-1) to (-3);
		\draw (-2) to (-4);
	}
}
\theoremstyle{plain}
\newtheorem{thm}{Theorem}
\newtheorem{example}[thm]{Example}
\crefname{claim}{Claim}{Claims}
\crefname{conjecture}{Conjecture}{Conjectures}
\crefname{definition}{Definition}{Definitions}
\crefname{example}{Example}{Examples}
\crefname{hypothesis}{Hypothesis}{Hypotheses}
\crefname{observation}{Observation}{Observations} 
\crefname{proposition}{Proposition}{Propositions}
\crefname{theorem}{Theorem}{Theorems}
\theoremstyle{remark}
\setlist{left= 0pt}
\title{Tractable Responsibility Measures for Ontology-Mediated Query Answering}
\author{%
Meghyn Bienvenu
\and
Diego Figueira
\and
Pierre Lafourcade
\\
\affiliations
Univ. Bordeaux, CNRS,  Bordeaux INP, LaBRI, UMR 5800, F-33400, Talence, France\\
\emails{
\{meghyn.bienvenu, diego.figueira, pierre.lafourcade\}@u-bordeaux.fr}
}
\begin{document}

\maketitle

\begin{abstract}

Recent work on quantitative approaches to explaining query answers
employs responsibility measures to assign scores to facts in order to quantify their 
respective contributions to obtaining a given answer. 
In this paper, we study the complexity of computing such responsibility scores in the setting of 
ontology-mediated query answering, 
focusing on a very recently introduced family of Shapley-value-based responsibility measures
defined in terms of weighted sums of minimal supports (WSMS). 
By exploiting results from the database setting, we can show that such measures 
enjoy polynomial data complexity for classes of ontology-mediated queries that are first-order-rewritable, 
whereas the problem becomes "shP"-hard when the ontology language can encode reachability queries
(via axioms like $\exists R. A \ic A$). 
To better understand the tractability frontier, we next explore the combined complexity of WSMS computation. 
We prove that 
intractability applies already to atomic queries if the ontology language supports conjunction, 
as well as to unions of `well-behaved' conjunctive queries, 
even in the absence of an ontology. 
By contrast, our study yields positive results for common DL-Lite dialects: by means of careful analysis, 
we 
identify classes of structurally restricted conjunctive queries (which intuitively disallow undesirable interactions between query atoms)
that admit tractable WSMS computation.  \medskip

\end{abstract}

\noindent
\raisebox{-.4ex}{\HandRight}\ \ This pdf contains internal links: clicking on a "notion@@notice" leads to its \AP ""definition@@notice"".%
\ifsubmission
  A long version of this paper can be found at \url{https://arxiv.org/abs/XXXXXXXXXXXXXX}.
\else
  This is the long version of the KR'25 paper \cite{ourKR25}.
\fi

\section{Introduction}
The question of how to explain query answers has received significant attention in both the database 
and ontology settings. Qualitative notions of explanation, based e.g.\ on minimal supports of fact or proofs, 
have been more extensively explored, particular in the ontology setting, cf.\ \cite{DBLP:conf/otm/BorgidaCR08,DBLP:conf/ruleml/AlrabbaaBKK22,DBLP:journals/jair/BienvenuBG19,DBLP:conf/ijcai/CeylanLMV19,DBLP:conf/ecai/CeylanLMV20}. 
However, there has been recent interest in quantitative notions of explanation based upon \emph{responsibility measures}, which assign scores to the dataset facts
to quantify their respective contributions to obtaining a given answer. 
Prior work on responsibility measures for query answers has predominantly focused on the so-called `drastic Shapley value' 
\cite{livshitsShapleyValueTuples2021,DeutchFKM22ComputingShapley,KhalilK23ShapleyRPQ,KaraOlteanuSuciuShapleyBack,ReshefKL20,ourpods24,karmakarExpectedShapleyLikeScores2024,ourKR24}. 
The drastic Shapley value is defined as the Shapley value of the 0/1 modeling of the (Boolean) query. It was motivated by the appealing theoretical characterization of the Shapley value as a concept in cooperative game theory, which is the only distribution of wealth among players respecting certain guarantees, known as `Shapley axioms' \cite{shapley:book1952}. The choice of modelling the query as a 0/1 cooperative game, however, has not been justified.

Unfortunately, the computation of the drastic Shapley value is generally intractable ($\sP$-hard in data complexity), 
even in the absence of ontologies and for very simple (conjunctive) queries \cite{livshitsShapleyValueTuples2021,ourpods24}.
Furthermore, it has recently been argued in \cite{ourpods25} that: 
    (i) not all Shapley axioms yield desirable properties when translated into the query answering setting, and 
    (ii) the genuinely desirable properties for responsibility measures of query answers do not pinpoint a  single best 
    score function.

In light of this, \cite{ourpods25} has very recently proposed a family of responsibility measures, based on \emph{weighted sums of minimal supports} (\emph{WSMS}), where the score of a fact is defined as a weighted sum of the sizes of the query's minimal supports containing it.
The cited work shows that WSMS satisfy several desirable properties and that they enjoy more favourable computational properties compared to the drastic Shapley value in the database setting. Further, WSMS can also be defined as the Shapley value of suitable cooperative games.

The positive results for WSMS in the database setting motivate us to investigate the complexity of computing WSMS responsibility scores in the more challenging setting of \emph{ontology-mediated query answering} (\emph{OMQA}) \cite{poggiLinkingDataOntologies2008,bienvenuOntologyMediatedQueryAnswering2015,xiaoOntologyBasedDataAccess2018}. For this first study of WSMS
in the ontology setting, we focus on description logic (DL) ontologies \cite{DBLP:books/daglib/0041477},
paying particular attention to DLs of the DL-Lite family \cite{calvaneseetal:dllite}, which are the most commonly adopted in OMQA,
due to their favourable computational properties.
We thus consider \emph{ontology-mediated queries (OMQs)} of the form $(\T, q)$,
where $\T$ is formulated in some DL and $q$ is either a conjunctive query (CQ) or atomic query. 

\subsubsection*{Contributions} 
Our results show that the good computational behaviour of WSMS in the database setting exhibited in \cite{ourpods25} extends to some relevant classes of ontology-mediated queries. 
This is in sharp contrast to the intractability of the drastic Shapley measure considered in the database \cite{livshitsShapleyValueTuples2021,ourpods24} and ontology \cite{ourKR24} settings.
    
More precisely, we observe that 
WSMS computation is tractable in data complexity 
for any UCQ-rewritable OMQ. In particular, this covers the class of OMQs $(\T,q)$ consisting of a DL-Lite ontology and CQ $q$ 
 (\Cref{th:data-ucq}). We show in fact that WSMS computation for such OMQs can be implemented using relational database systems via simple SQL queries (\Cref{thm:rewritting-SQL} \& \Cref{cor:redux-to-SQL}). 
We further define a class of `well-behaved' OMQs composed of DL-Lite ontologies and bounded "treewidth" CQs 
for which we establish tractability also in combined complexity (corollary of \Cref{th:cfdllite}).

We also identify DL constructs that render WSMS computation intractable. 
In particular, we show that the data complexity becomes $\sP$-hard for classes of OMQs 
exhibiting reachability behaviour, e.g.\ admitting axioms \mbox{$\exists R. A \ic A$} (\Cref{cor:reachabilityDL-hard-data}).
Furthermore, 
the presence of concept conjunction, present in lightweight DLs like as $\EL$ and Horn dialects of DL-Lite, 
leads to 
$\sP$-hardness in combined complexity (\Cref{prop:conjinstancehard}).
Furthermore, we show that while UCQ-rewritable OMQs enjoy tractable data complexity,
this is not the case for combined complexity, 
even if the rewriting falls within a very restrictive fragment of UCQs (namely, acyclic and self-join free) (\Cref{prop:sjfUCQhard}).

\subsubsection*{Organization}
After reviewing basic terminology and notation, 
we recall in \Cref{sec:shapley} the different responsibility measures, in particular, the recently introduced WSMS,  
explain how they can be applied in the OMQA setting, and define the WSMS computation task. 
\Cref{sec:data} focuses on (in)tractability results on data complexity. The remaining sections consider combined complexity.
\Cref{sec:combined-atomic} presents (in)tractability results for OMQs based upon atomic queries. 
\Cref{sec:ucq} shows that UCQs are generally intractable for WSMS computation.
Finally, \Cref{sec:cf1} exhibits a condition ensuring polynomial-time tractability.
We finish with some concluding remarks in \Cref{sec:conclusion}.


\section{Preliminaries}
We recall key definitions and notation concerning description logics 
and ontology-mediated query answering. 
\subsubsection*{Description Logic Knowledge Bases}
\AP
A ""description logic"" ("DL") ""knowledge base"" ("KB") $\K =(\A,\T)$ consists of an "ABox" $\A$ and a "TBox" $\T$, 
constructed from mutually disjoint sets $\intro*\cnames$ of ""concept names"" (unary predicates), $\intro*\rnames$ of ""role names"" (binary predicates), 
and $\intro*\inames$ of ""individual names"" (constants). 
\AP An ""inverse role"" has the form $r^-$, with $r \in \rnames$,
and we use $\intro*\NRpm =  \rnames \cup \{r^- \mid r \in \rnames\}$ for the set of ""roles"". 
\AP
The ""ABox"" 
is a finite set 
of ""concept assertions"" of the form $\Amsf(c)$ with $A \in \cnames, c \in \inames$ and 
""role assertions"" 
$\rmsf(b, c)$ with $\rmsf \in \rnames, b,c \in \inames$. 
We use \AP$\intro*\const(\A)$ for the set of "individual names" in $\A$,
and 
we write $R(b, c)$ to mean $r(b,c)$ if $R=r \in \rnames$
and $r(c,b)$ if $R=r^-$.  
\AP
The ""TBox"" (ontology) 
is a finite set of ""axioms""
whose form depends on the DL in question. 

\AP
Many of our results concern lightweight DLs of the ""DL-Lite"" family. 
We shall in particular consider the $\intro*{\dlliter}$ dialect, whose TBox
axioms take the form of \emph{concept inclusions} $B \sqsubseteq C$
and \emph{role inclusions} $R \sqsubseteq S$,
built according to the following grammar
\begin{align*}
B := \Amsf \mid \exists R \quad C:= B \mid \neg B\quad
 S:= R \mid \neg R
\end{align*}
where $\Amsf\in \cnames$ and $R \in \NRpm$. 
\AP
The logic $\intro*{\dllitec}$ is obtained from $\dlliter$ by disallowing role inclusions. 
\AP
Another prominent lightweight DL is $\intro*{\EL}$, whose
TBoxes consist of concept inclusions $D_1 \sqsubseteq D_2$
between $\EL$-concepts 
built as follows: 
$$D:= \top \mid \Amsf \mid D \sqcap D \mid \exists \rmsf.D \qquad \Amsf\in \cnames, \rmsf\in \rnames  $$

\AP
The semantics of DL KBs is defined using 
""interpretations"" $\I=(\Delta^{\I},\cdot^{\I})$, 
 where the ""domain"" $\Delta^{\I}$ is a non-empty set 
 and the interpretation function ${.}^{\I}$ maps each $a \in \inames$ to $a^{\I} \in \Delta^{\I}$, 
 each $\Amsf \in \cnames$ to $\Amsf^{\I} \subseteq \Delta^{\I}$, 
 each $\rmsf \in \rnames$ to $\rmsf^{\I} \subseteq \Delta^{\I} \times \Delta^{\I}$.
 The function  $\cdot^{\I}$ is extended to complex concepts and roles: %
 $\top^{\I}=\Delta^{\I}$,
 $(\exists R)^{\I}=\{d \mid \exists e \in \Delta^{\I}, (d, e) \in R^{\I}\}$,
$(\rmsf^-)^\I=\{(e,d) \mid (d,e)\in \rmsf^\I \}$, $(C \sqcap D)^\I = C^\I \cap D^\I$. 
An interpretation $\I$ \emph{satisfies an assertion} $\Amsf(a)$ (resp.\ $\rmsf(b, c)$) if $b^\I \in A^{\I}$ (resp.\ $(b^\I, c^\I) \in \rmsf^{\I}$). 
$\I$ \emph{satisfies a (concept or role) inclusion} $\Gmsf \sqsubseteq \Hmsf$ if $\Gmsf^{\I} \subseteq \Hmsf^{\I}$. 
\AP
We call  $\I$ a ""model of a KB"" $\K$, denoted $\I\models\K$, if $\I$ satisfies all axioms in $\T$ (written $\I\models\T$) and all assertions in $\A$ (written $\I\models\A$). 
It will also be convenient to introduce notation $\K \models \exists R (a)$ (with $R \in \NRpm$) to indicate that $a^\I \in (\exists R)^\I$ in every model $\I$ of $\K$. 
We use $\intro*{\mods}(\K)$ for the set of models of $\K$. 
\AP
A KB $\K$ is ""consistent"" if it has a model.

\subsubsection*{Databases}
\AP
While our main interest is in DL KBs, we shall import definitions and techniques from the database literature,
so we briefly introduce some notation and terminology for databases. Formally, a 
\AP
(relational) ""database"" $\D$ 
is a finite set of relational ""facts"" $P(\vec{a})$, where $P$ is a relational predicate of arity $k\geq 1$ 
and $\vec{a}$ is a $k$-ary vector of ""constants"" (which we may assume drawn from $\inames$). 
The ""signature"" of $\D$ is the set of predicates that occur in the facts of $\D$, 
and we speak of a binary "signature" if only unary and binary predicates are used. 
We will use $\alpha \in \D$ to indicate that fact $\alpha$ occurs in $\D$. 

Every "database" $\D$ can be equivalently viewed as a finite first-order interpretation $\I_\D$
whose "domain" is the set of constants in $\D$, which interprets every constant from $\D$ as itself, 
and which interprets each predicate $P$ 
as follows: $P^\I = \{\vec{a} \mid P(\vec{a}) \in \D\}$. 
Moreover, it will sometimes prove convenient to treat an ABox $\A$ as a "database"  
and to consider the associated finite interpretation $\I_\A$. 

\subsubsection*{Queries} 
An (abstract, non-numeric) ""query"" of arity $k\geq0$ is a function that 
maps every first-order interpretation $\I$ to a set $q(\I)$ of $k$-tuples of "domain" elements. 
\AP We will mostly work with ""Boolean queries"", of arity 0, for which 
$q(\I)$ can only take two values: $\emptyset$ which we interpret as `false', and $\{()\}$ which we interpret as `true'. 
In the latter case, we write $\I \models q$. 
Queries may also be (and in fact are usually) evaluated over databases, 
in which case $q(\D)$ returns a set of tuples of constants from $\D$ (called ""answers"")
and $\D \models q$ means that $q$ evaluates to `true' in 
the associated interpretation $\I_\D$. 

We consider various concrete classes of  
""first-order queries"" ("FO"$\phantomintro{\FO}$-queries for short), 
which are defined as first-order logic formulas. 
Note that when querying DL KBs, the relational atoms in queries 
will use predicates from $\cnames \cup \rnames$, whereas in the database context, 
the atoms will use the available database predicates. 
\AP The most prominent class of "FO" queries is $\intro*{\CQ}$, the class of ""conjunctive queries"", which are defined by formulas of the form $q(\vec x) = \exists \vec{y}. \alpha_1\land \dots \land\alpha_n$ where the $\alpha_i$ are relational ""atoms"" that can contain "constants" and/or ""variables"" in $\intro*{\vars}(q)\defeq \vec{x}\cup\vec{y}$, where $\vec{x}$ is the vector of "free variables" of the formula.
\AP A "CQ" $q$ can be partitioned into is ""connected components@@q"" which are the inclusion-maximal "connected@@q" subqueries of $q$.
\AP When a "CQ" has no "free variables" it is naturally "Boolean@@q".
\AP We shall also consider the subclass $\intro*{\AQ}$ of "CQs" with a single "atom" called ""atomic queries"" (AQs), 
\AP the class $\intro*{\UCQ}$ of ""unions of conjunctive queries""
defined as finite disjunctions of "CQ"s with the same set of free variables, and the class
\AP
$\intro[\CQneq]{\mathsf{(U)CQ}^{\neq}}\phantomintro*{\UCQneq}$ of "(U)CQ@UCQ"
with inequality atoms. 

We will often treat "CQs" as sets of atoms and use notation like $\alpha \in q$ to indicate that atom $\alpha$ is a conjunct of the CQ~$q$.
It is well known that when $q$ is a constant-free "Boolean@@q" "CQ", $\I \models q$ iff there is a homomorphism $h:q \homto \I$, i.e.\ 
a function $h$ from $\vars(q)$ to $\Delta^\I$ such that $P(\vec{x}) \in q$ implies that $h(\vec{x}) \in P^\I$ (or $P(h(\vec{x})) \in \D$ if we 
evaluate $q$ over a database $\D$). This characterization extends to "Boolean@@q" "CQs" with constants by adding the requirement that 
$h$ 
maps every constant $c$ to $c^\I$ (or to $c$ itself, if we work with databases). 
Note that one can define in the same manner homomorphisms between two queries or between two interpretations. 

\subsubsection*{Ontology-Mediated Query Answering}
We say that a "Boolean query" $q$ is ""entailed from a KB""~$\K$,
written $\K \models q$, if $\I \models q$ for every $\I \in \mods(\K)$.
\AP The ""certain answers"" to a non-"Boolean@@q" $k$-ary query $q(\vec{x})$ "wrt"\ a "KB" $\K = (\A, \T)$ 
are the tuples $\vec{a} \in \const(\A)^k$ %
such that $\K \models q(\vec{a})$, 
with $q(\vec{a})$ the "Boolean query" obtained by substituting $\vec{a}$ for 
$\vec{x}$.

Alternatively, we can 
treat $\T$ and $q$ together as constituting a composite \AP ""ontology-mediated query"" (\reintro{OMQ}) $Q=(\T, q)$, 
in which case we will write $\A \models (\T, q)$ 
to mean $(\A, \T) \models q$.
The notation $(\mathcal{L}, \mathcal{Q)}$ will be used to designate the class of all OMQs $(\T, q)$ consisting of a TBox formulated in the DL $\L$
and a query $q \in \mathcal{Q}$.

A common approach to computing certain answers (or checking query entailment) is to rewrite an "OMQ" into another query that can be directly evaluated using a database system. Formally, we call a query $q^*(\vec{x})$ a \AP ""rewriting of an OMQ"" $(\T,q)$ if for every "ABox" $\A$ and candidate answer $\vec{a}$: 
$$\A \models (\T,q(\vec{a})) \quad \text{ iff }  \quad \A \models q^*(\vec{a})$$
If we restrict the above definition by requiring that $q^*$ belong to a class of queries 
$\C$ ("eg" $\FO$ or $\UCQneq$), we call it a \AP ""$\C$-rewriting@FO-rewriting"".
Several dialects of DL-Lite, including $\dlliter$, are known to guarantee the existence of $\FO$-rewritings, 
meaning that every OMQ from $(\dlliter, \CQ)$ possesses an FO-rewriting. Moreover, such rewritings 
in fact can be expressed in $\UCQ$ (or in $\UCQneq$, if the DL admits functional roles or number restrictions). 

In Horn DLs, like $\EL$ and $\dlliter$, every "consistent" "KB" $\K=(\A, \T)$ admits a \AP ""canonical model"" $\intro*\canmod{\A,\T}$
with the property that for every model $\J$ of $\K$, there is a homomorphism $h: \canmod{\A,\T} \homto \J$ that is the identity on $\const(\A)$. 
Importantly, if $\K=(\A, \T)$ admits a "canonical model" $\canmod{\A,\T}$, then for every 
"CQ" $q$ and candidate answer tuple $\vec{a}$: 
$$\K \models q(\vec{a}) \quad \text{ iff } \quad \canmod{\A,\T} \models q(\vec{a})$$
The precise definition of $\canmod{\A,\T}$ depends on the particular Horn DL.  In the case of \dlliter, 
the "domain" $\Delta^{\canmod{\A,\T}}$ of $\canmod{\A, \T}$ consists of all words $a R_{1} \ldots R_{n}$ ($n \geq 0$) such that $a \in \const(\A)$, $R_{i} \in \NRpm$, and:
\begin{itemize}
\item if $n \geq 1$, then $(\A, \T) \models \exists R_1 (a)$ and there is no $b \in \const(\A)$ such that $(\A, \T) \models R_1 (a,b)$
\item for $1 \leq i < n$, $\mathcal{T} \models \exists R_i^-
  \sqsubseteq \exists R_{i+1}$ and \mbox{$R_i^- \ne R_{i+1}$}. 
  \end{itemize}
Elements in $\Delta^{\canmod{\A,\T}} \setminus \const(\A)$ will be 
called ""anonymous elements"". 
The interpretation function is defined as follows:
\begin{align*}
 a^{\canmod{\T,\A}} =    &   \, a \text{ for all } a \in  \const(\A)  \\
 A^{\canmod{\T,\A}} =   &   \,
 \Delta^{\canmod{\A,\T}} \cap (
 \{ a \in \const(\A) \mid (\A,\T) \models A(a) \} \cup  \\ &
 \{ a R_{1} \ldots R_{n}
\mid n \geq 1 \mbox{ and }
\T \models \exists{R_n^-} \sqsubseteq A \}{\color{green} )}  \\
 r^{\canmod{\T,\A}} =   &  \,
\left(\Delta^{\canmod{\A,\T}}\right)^2 \cap (
 \{ (a,b) \mid r(a,b) \in \A \} \, \cup  \\ & 
  \,   \{ (w_{1},w_{2})
\mid  
w_{2} = w_{1}R'  \text{ and } \T \models R' \sqsubseteq r \} \,\cup   \\
&   \,  \{ (w_{2},w_{1}) 
\mid \
w_{2} = w_{1}R'  \text{ and } \T \models R' \sqsubseteq r^{-} \}{\color{green} )} 
\end{align*}

\paragraph{Complexity}

\AP We assume familiarity with the class $\intro*{\FP}$ of functions that can be computed in deterministic polynomial time, as well as the class $\intro*{\sP}$ of functions defined as counting the
accepting runs of a nondeterministic Turing machine.
\AP All hardness results are defined from ""polynomial-time Turing reductions"": we say that $P_1$ "reduces" to $P_2$ and write $P_1\intro*{\polyrx} P_2$ if the problem $P_1$ can be solved in polynomial time given access to a unit-cost oracle that solves $P_2$.


\section{Responsibility Measures \& Shapley Value}\label{sec:shapley}
In this section, we recall the notion of responsibility measure, which provides 
quantitative explanations of query answers, 
and some concrete Shapley-value-based responsibility measures. 
We also explain and illustrate how these notions, defined for databases, transfer to the OMQA setting. 

\subsection{Responsibility Measures for Query Answers}
Although we shall be interested in employing responsibility measures
to quantify the contribution of facts to obtaining an 
"answer" $\vec{a}$ 
to a query~$q(\vec{x})$,
it will actually be more convenient to consider the equivalent task of quantifying contributions to 
satisfying 
the associated Boolean query 
$q(\vec{a})$ (obtained by instantiating the "free variables" $\vec{x}$ of $q$ with $\vec{a}$). 
For this reason, in the remainder of the paper,  
we shall w.l.o.g.\ 
restrict ourselves to "Boolean queries". 

We shall further focus on "monotone" Boolean queries, defined in the database setting 
as queries $q$ such that $\D_1 \models q \Ra \D_2 \models q$ whenever $\D_1\subseteq \D_2$.
Such queries notably include the class of homomorphism-closed queries, 
which covers most well-known classes of "OMQ"s, as well as $\UCQneq$. 
Note that a natural qualitative approach to explaining why a "monotone" "Boolean query" $q$ holds in a database $\D$ is to consider the set $\intro*{\Minsups{q}}(\D)$ of ""minimal supports"" of $q$ in $\D$, defined as 
inclusion-minimal subsets 
$\D' \subseteq \D$ s.t.\ $\D' \models q$. 

\AP
Our focus will be on providing \emph{quantitative} explanations in the form of 
 ""responsibility measures"", which are functions that assign a score to every "fact" in the data, reflecting their contributions to making the query hold. 
Such measures have been formally defined, in the database setting, 
as ternary functions $\phi$ that take as input a "database" $\D$, a "(Boolean) query" $q$ and a "fact" $\alpha\in \D$, and output a numerical value. As this definition is extremely permissive, \cite[§4.1]{ourpods25}\footnote{The definitions in the cited paper slightly differ from the ones we present here since we choose to assign scores to all facts, whereas in prior work, the database is partitioned into sets of 
\emph{exogenous} and \emph{endogenous} facts, with only endogenous facts assigned scores. Removing this distinction simplifies the technical presentation, while still covering what is arguably the most relevant practical setting (in which all facts are treated as endogenous). 
 }
 identifies a set of desirable properties that $\phi$ ought to satisfy, focusing on the case of (Boolean) "monotone" queries. 
While the formal definitions are rather technical and outside the scope of this paper, these properties intuitively state: 
   \AP""(Sym-db)"" if two "facts" are interchangeable w.r.t.\ the query, they should have equal responsibility;
   \AP""(Null-db)"" if a "fact" $\alpha\in\D$ is ""irrelevant"" in the sense that $S\cup\{\alpha\} \models q$ iff $S\models q$ for all $S\subseteq \D$, then $\phi(\D,q,\alpha) = 0$, otherwise $\phi(\D,q,\alpha) > 0$;
   and
   \AP""(MS1)"" ("resp" ""(MS2)"") all other things being equal, a fact that appears in smaller (resp.\ more) "minimal supports" should have higher responsibility.

The notions of responsibility measures and minimal supports can be straightforwardly translated into the OMQA setting: 
it suffices to  
take the ABox as the database and use an OMQ $(\T,q)$ for the database query. 

\begin{example}
Consider the $\EL$ "KB" $(\A,\T)$ defined in \Cref{fig:ex-kb} %
 and the "CQ" $\mathsf{FishBased}(x)$, which we instantiate with the answer $\{x\mapsto cancalaiseSole\}$ to obtain the Boolean "CQ" $q\defeq \mathsf{FishBased}(cancalaiseSole)$. There are 3 minimal supports for the OMQ $Q\defeq(\T,q)$ in $\A$: $\{f_1,f_2\}$, $\{f_3,f_4,f_5\}$ and $\{f_3,f_6,f_7\}$.
We illustrate how the properties defined above translate in this context:
by "(Sym-db)", we have $\phi(\D,Q,f_1)=\phi(\D,Q,f_2)$ (as $f_1$ and $f_2$ appear in the same minimal supports); 
by "(Null-db)", we have $\phi(\D,Q,f_0)=0$ (as it is "irrelevant");
by "(MS1)", we have $\phi(\D,Q,f_1)>\phi(\D,Q,f_4)$ (as $f_1$ appears in smaller supports);
and by "(MS2)", $\phi(\D,Q,f_3)>\phi(\D,Q,f_4)$ (as $f_3$ appears in more supports). 
\end{example}

\begin{figure}[tb]
\small\centering
\[\hspace{-1mm}\begin{array}{ll}
   \exists \mathsf{hasIng}.\mathsf{FishBased} \ic \mathsf{FishBased}&
   \mathsf{hasGrnsh} \ic \mathsf{hasIng}\\
   \mathsf{Seafood}\ic \mathsf{FishBased}&
   \mathsf{Fish}\ic \mathsf{FishBased}\\[.3em]
\hline\\[-.5em]
\multicolumn{2}{c}{\begin{tikzpicture}
	\coordinate (00) at (-2, 0.7);
	\coordinate (01) at (2, 0.7); 
	\coordinate (02) at (-2, -0.5);
	\coordinate (03) at (2, -0.5);
	\coordinate (04) at (2, 1.2);
	\coordinate (05) at (2, 0);
	\coordinate (06) at (-2, 1.9);
	\coordinate (07) at (2, 1.9); 
	\coordinate (08) at (2, 2.4);
	\begin{pgfonlayer}{nodelayer}
		\node [draw,minimum height=4ex,rounded corners] (0) at (00) {$cancalaiseSole$};
		\node [draw,minimum height=4ex,rounded corners] (1) at (01) {$oyster$};
		\node [draw,minimum height=4ex,rounded corners] (2) at (02) {$cancalaiseGarnish$};
		\node [draw,minimum height=4ex,rounded corners] (3) at (03) {$shrimp$};
		\node [] (4) at (04) {$\mathsf{Seafood}$};
		\node [] at (4.north east) {\small$f_5$};
		\node [] (5) at (05) {$\mathsf{Seafood}$};
		\node [] at (5.north east) {\small$f_7$};
		\node [draw,minimum height=4ex,rounded corners] (6) at (06) {$butter$};
		\node [draw,minimum height=4ex,rounded corners] (7) at (07) {$sole$};
		\node [] (8) at (08) {$\mathsf{Fish}$};
		\node [] at (8.north east) {\small$f_2$};
	\end{pgfonlayer}
	\begin{pgfonlayer}{edgelayer}
		\draw [->, >=stealth] (0) to node[midway, left] {$\mathsf{hasGrnsh}$} node[midway, right] {\small$f_3$} (2);
		\draw [->, >=stealth] (2) to node[midway, above=-.1,sloped] {$\mathsf{hasIngr}$} node[midway,below=-.1,sloped] {\small$f_4$} (1.west);
		\draw [->, >=stealth] (2) to node[midway, above=-.1,sloped] {$\mathsf{hasIngr}$} node[midway,below=-.1,sloped,] {\small$f_6$} (3);
		\draw [->, >=stealth] (0) to node[midway, above=-.1,sloped] {$\mathsf{hasIngr}$} node[midway,below=-.1,sloped] {\small$f_1$} (7.west);
		\draw [->, >=stealth] (0) to node[midway, left] {$\mathsf{hasIngr}$} node[midway, right] {\small$f_0$} (6);
	\end{pgfonlayer}
\end{tikzpicture}}
\end{array}\]
\vspace{-0.8em}
\caption{An example "KB", with data and knowledge about a recipe from \protect\cite{escoffierGuideCulinaireAidememoire1903}. The arrows represent "role assertions" and labels around boxes ("eg" $\mathsf{Fish}$) represent "concept assertions".}
\label{fig:ex-kb}
\end{figure}
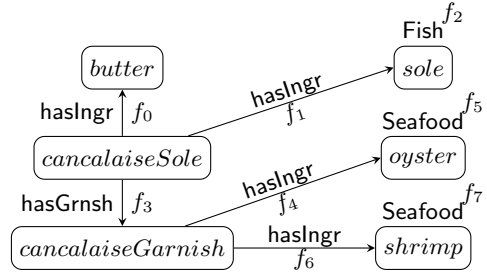

\subsection{Shapley-Based Responsibility Measures}
The "responsibility measures" we consider in this paper are based on the `"Shapley value"'. Originally defined in \cite{shapley:book1952}, it takes as input a cooperative game consisting of a finite set $P$ of players and a ""wealth function"" $\intro*\scorefun: \partsof P \to \lQ$ that assigns a value to each coalition ("ie", set) of players, with $\scorefun(\emptyset)=0$. The \AP""Shapley value"" then assigns to each player $p\in P$ a value $\intro*\Sh(P,\scorefun,p)$ that should be seen as a fair share of the total wealth $\scorefun(P)$ of the game that should be awarded to $p$ based on the respective contributions of all players.
By ``fair share'' we mean that the "Shapley value" is, provably, the only wealth distribution scheme that satisfies a very natural set of axioms \cite[Axioms 1 to 3]{shapley:book1952}.

To obtain a "responsibility measure" from the "Shapley value", one 
needs to 
model the input instance $(\D,q)$ as a "cooperative game" $(P,\xi)$. The set $P$ contains 
the elements that will receive a score, hence it should naturally be the set $\D$ itself. As for the "wealth function", it must assign a numerical score to every database, reflecting in some way the satisfaction of the query. 
Formally, one needs to provide a ""wealth function family"" $\intro*\PARAMscorefun{\star}  \defeq (\paramscorefun[q]{\star})_q$ which associates a "wealth function"~$\intro*\paramscorefun[q]{\star}$ with each query $q$. A "responsibility measure" can be straightforwardly obtained by applying the "Shapley value" to the game $(\D,\paramscorefun[q]{\star})$: 
$$\phi(\D,q,\alpha) \defeq \Sh(\D,\paramscorefun[q]{\star},\alpha)$$
The first "wealth function family" that was considered in the literature is $\intro*\Dscorefun$, defined by: $\intro*\dscorefun[q](\D) \defeq 1$ if $\D \models q$ and $0$ otherwise \cite{livshitsShapleyValueTuples2021}, which gives rise to the \AP""drastic Shapley value"" $\Sh(\D,\dscorefun[q],\alpha)$. In fact, $\Dscorefun$ was until recently the only "wealth function" used to define Shapley-based responsibility measures for Boolean queries.\footnote{As a consequence, the "drastic Shapley value" is simply called `Shapley value' in many papers.} 

Very recently, however, a new family of "responsibility measures" called 
\AP ""weighted sums of minimal supports"" ("WSMSs") has been defined as follows:
\begin{equation}\label{eq:wsms}
   \phi_{\mathsf{wsms}}^w(\+D, q, \alpha) \defeq \sum_{\substack{S\in\Minsups q(\D)\\\alpha\in S}} w(|S|,|\+D|)
\end{equation}
based upon some \AP""weight function"" $w: \Nat \times \Nat \to \lQ$ \cite{ourpods25}. 
It has been shown that all WSMSs can be equivalently defined via the Shapley value: for every "weight function" $w$, there exists a "wealth function family" \AP$\phantomintro{\wscorefun}\intro*\Wscorefun{w} =(\wscorefun[q]{w})_q$ 
such that 
$$\phi_{\mathsf{wsms}}^w(\+D, q, \alpha) = \Sh(\D,\wscorefun[q]{w},\alpha)$$
\cite[Proposition 4.4]{ourpods25}. 
\AP
The "wealth function family" $\phantomintro{\msscorefun}\intro*\MSscorefun \defeq \Wscorefun{w}$ induced by the inverse weight function $w: (n,k) \mapsto \nicefrac{1}{n}$ is of particular interest as its "wealth function" $\msscorefun[q](\D)$ is simply the number  of "minimal supports" for $q$ in $\D$, 
which constitutes a very natural measure of how `robust' the entailment $\D\models q$ is.

It should be noted that the "Shapley values" obtained from $\Dscorefun$ and from $\Wscorefun{w}$ (for any positive and non-decreasing "weight function" $w$) yield "responsibility measures" that satisfy the properties "(Sym-db)"--"(MS2)" \cite[Propositions B.1 and B.2]{ourpods25}. As the following example illustrates, however, these measures do not always coincide, as the properties do not identify a unique `reasonable' responsibility measure.

\begin{example}
Reconsider the $\dllitec$ "KB" $(\A,\T)$ from \Cref{fig:ex-kb} %
and the "CQ" $q\defeq \mathsf{FishBased}(cancalaiseSole)$. Although the properties "(Sym-db)"--"(MS2)" enforce many conditions, they do not restrict the relative values of $f_1$ and~$f_3$. Indeed, we can observe that $\Sh(\D,\dscorefun[q],f_1)>\Sh(\D,\dscorefun[q],f_3)$, but $\Sh(\D,\msscorefun[q],f_1)<\Sh(\D,\msscorefun[q],f_3)$, see \Cref{tb:ex-kb}.
E.g., $\Sh(\D,\msscorefun[q],f_3)= \nicefrac{1}{3} + \nicefrac{1}{3}$ since $f_3$ is in two "minimal supports", both of size 3, and hence each contributing $\nicefrac{1}{3}$.
\end{example}

\begin{table}[tb]
\centering
 \begin{tabular}{|r|c c c c|} 
 \hline
 $\star$ & $f_0$ & $f_1,f_2$ & $f_3$ & $f_4,f_5,f_6,f_7$ \\[0.5ex] 
 \hline\hline&&&&\\[-2ex]
 $\mathsf{dr}$
 & $0$ & $\frac{1224}{5040}$ & $\frac{1056}{5040}$ & $\frac{384}{5040}$\\[0.7ex]
 $\mathsf{ms}$
 & $0$ & $\frac{1}{2}$ & $\frac{2}{3}$ & $\frac{1}{3}$\\[0.5ex] 
 \hline
 \end{tabular}
 \caption{Values of $\Sh(\D,\paramscorefun[q]{\star},\_)$ for the instance in \Cref{fig:ex-kb}
. The facts that are equivalent by "(Sym-db)" are grouped together.}
\label{tb:ex-kb}
\end{table}

Note that while we focus on $\MSscorefun$ as the archetypical "WSMS", it should be observed that the weight function $w$ can be adjusted to suit the needs of particular settings by
giving more or less weight to the size of the "minimal supports" relative to their numbers (intuitively tuning the relative importance of "(MS1)" and "(MS2)"). At the extremes, \cite[§4.4]{ourpods25} introduced two representative "WSMS": \AP$\intro*\Sscorefun$ that always favours appearing in the smallest "minimal supports", and $\intro*\SHARPscorefun$ that always favours the highest total number of "minimal supports".

Following \cite{ourpods25}, for any "wealth function family" $\PARAMscorefun{\star}$ and class $\C$ of queries, we denote by \AP$\phantomintro{\dShapley}\phantomintro{\msShapley}\phantomintro{\wShapley}\intro*\paramShapley{\C}{\star}$ the problem of computing $\Sh(\D,\paramscorefun[q]{\star},\alpha)$ given  any "database" $\D$, fact $\alpha \in \D$, and query $q\in\C$. We also consider the problem $\paramShapley{q}{\star}$ associated with a single fixed query~$q$. 
Our focus in this paper will be on the case $\PARAMscorefun{\star}=\Wscorefun{w}$ for some "weight function" $w$, in particular $\MSscorefun$,
in which case we will speak of ""WSMS computation"". Moreover, we shall study these tasks in the OMQA setting, so 
$\C$ will be a class $(\L,\Q)$ of OMQs, and $q$ will 
be a particular OMQ $Q$.


\section{Data Complexity}\label{sec:data}
We initiate our study of the complexity of "WSMS computation" 
by considering the \emph{data complexity} of the task $\wShapley{(\L,\Q)}{w}$, for different classes $(\L,\Q)$ of "OMQs". 
As usual, data complexity means that complexity is measured only w.r.t.\ the size of the ABox, while the size of the "OMQ" is treated as a 
constant. 
Observe that $\wShapley{(\L,\Q)}{w}$ enjoys $\FP$ data complexity just in the case that 
$\wShapley{Q}{w}$ is in $\FP$ for every "OMQ" $Q \in (\L,\Q)$. Likewise, $\wShapley{(\L,\Q)}{w}$
is $\sP$-hard iff there is some  "OMQ" $Q \in (\L,\Q)$ for which $\wShapley{Q}{w}$ is $\sP$-hard.

Our data complexity results for "OMQs" naturally build upon existing results from the database setting \cite{ourpods25}. 
That work establishes a key lemma that essentially tells us that, assuming the weight function $w$ is "tractable@@wf" and "reversible@@wf", 
the $\wShapley{Q}{w}$ task boils down to counting minimal supports, no more nor less.
\AP
It would be superfluous to give the precise definition of ""tractable@@wf"" and ""reversible@@wf"" weight functions here: simply note that they are minor assumptions that are trivially satisfied by the three explicit instances we consider ($\MSscorefun$, $\Sscorefun$ and $\SHARPscorefun$).\AP
To present this lemma in our setting, we fix a "Boolean@@q" "OMQ" $Q$  ("ie" $Q=(\T,q)$ with $q$ a "Boolean query") and consider two ""numeric queries"" ("ie" "queries" that output a number 
instead of a set of "answers") derived from $Q$: 
\begin{itemize}
   \item $\intro*\countFMS{Q}$ outputs the number $\countFMS{Q}(k,\A)$ of "minimal supports" for $Q$ of size\footnote{FMS stands for \emph{Fixed-size Minimal Supports}.} $k$ in the "ABox" $\A$, for every size $k$
\item $\intro*\countMS{Q}$ outputs the total number of "minimal supports" of $Q$.
\end{itemize}
We denote by $\intro*\evalCountFMS{Q}$ and $\intro*\evalCountMS{Q}$ the problems of computing these numeric queries over any input "ABox".
We now state the key lemma, phrased for OMQs:

\begin{lemma}\label{lem:svc-fms}
   \cite[Lemmas 5.1 and 5.5]{ourpods25}
   For every "reversible@@wf" "tractable weight function" $w$, and "Boolean@@q" "monotone" "OMQ" $Q$, $$\evalCountMS{Q} \polyrx \wShapley{Q}{w} \polyrx \evalCountFMS{Q}.$$
\end{lemma}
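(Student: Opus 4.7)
The plan is to observe that the lemma is essentially a citation-style transfer from the database setting, and that the two reductions go through provided one checks that an OMQ, viewed as a function $\A \mapsto \{\A \models (\T,q)\}$ on the ABox, behaves as a monotone Boolean query on a database. This is immediate: if $Q = (\T,q)$ is monotone (which is the hypothesis), then any $\A_1 \subseteq \A_2$ with $\A_1 \models Q$ satisfies $\A_2 \models Q$, so the database-theoretic machinery from \cite{ourpods25} applies verbatim with $\A$ playing the role of $\D$ and minimal supports defined in the standard way.

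For the easier reduction $\wShapley{Q}{w} \polyrx \evalCountFMS{Q}$, I would start from the closed form \eqref{eq:wsms}, group the summands by support size, and use the identity
\[
   \phi_{\mathsf{wsms}}^w(\A, Q, \alpha) \;=\; \sum_{k=1}^{|\A|} w(k, |\A|) \cdot \bigl(\countFMS{Q}(k,\A) - \countFMS{Q}(k, \A \setminus \{\alpha\})\bigr),
\]
which is valid because a subset $S \subseteq \A \setminus \{\alpha\}$ is in $\Minsups{Q}(\A)$ iff it is in $\Minsups{Q}(\A \setminus \{\alpha\})$, by monotonicity of $Q$. Since $w$ is assumed tractable, every weight $w(k,|\A|)$ can be computed in polynomial time, so $2|\A|$ calls to the $\evalCountFMS{Q}$ oracle suffice.

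For the harder reduction $\evalCountMS{Q} \polyrx \wShapley{Q}{w}$, the key ingredient is the reversibility of $w$. The plan is the standard padding argument: enlarge the input $\A$ with successively more "irrelevant" facts (facts that are in no minimal support and whose addition does not create new supports), so that the set $\Minsups{Q}(\A)$ is preserved while the database-size parameter $n$ of $w(k,n)$ is varied. Summing $\phi_{\mathsf{wsms}}^w$ over the facts of $\A$ at each padding level yields the linear combinations
\[
   \sum_{\alpha \in \A} \phi_{\mathsf{wsms}}^w(\A', Q, \alpha) \;=\; \sum_{k} k \cdot w(k,|\A'|) \cdot \countFMS{Q}(k, \A),
\]
indexed by the padded size $|\A'|$. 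Reversibility of $w$ ensures that enough such equations yield an invertible system from which one can solve for each $\countFMS{Q}(k,\A)$, and hence recover $\countMS{Q}(\A) = \sum_k \countFMS{Q}(k,\A)$.

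The main obstacle, and the only place where the OMQ setting requires a small extra argument beyond \cite{ourpods25}, is constructing irrelevant padding facts in the presence of a TBox: a TBox axiom could in principle turn a seemingly-inert assertion into the trigger of a new minimal support. To handle this, I would pick a concept name $A \in \cnames$ that does not occur in $\T$ or $q$ (which exists since $\cnames$ is infinite and $\T, q$ are fixed finite objects in data complexity) and pad with assertions $A(c)$ for fresh individuals $c \notin \const(\A)$; such facts cannot participate in any derivation relevant to $Q$, so $\Minsups{Q}(\A \cup \text{pad}) = \Minsups{Q}(\A)$, and the reduction proceeds exactly as in the database case.
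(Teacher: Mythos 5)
The paper never proves this lemma: it is imported wholesale from \cite{ourpods25} (Lemmas 5.1 and 5.5), the only ``proof'' being the earlier observation that the OMQA setting instantiates the database setting by taking the ABox as the database and the OMQ as the query. Your proposal adopts exactly that stance and then reconstructs the two reductions, so the comparison is really about whether your reconstruction is sound. The direction $\wShapley{Q}{w} \polyrx \evalCountFMS{Q}$ is: minimality of a support is an intrinsic property of the set, so the size-$k$ minimal supports of $\A$ avoiding $\alpha$ are precisely the size-$k$ minimal supports of $\A\setminus\{\alpha\}$, hence your difference formula is correct (monotonicity is not even needed there), and tractability of $w$ gives polynomially many oracle calls. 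Your padding device for the other direction --- assertions over a concept name foreign to $\T$ and $q$ on fresh individuals --- is also the right extra observation for why database-style padding survives the presence of a TBox, since entailment of a DL OMQ depends only on assertions over its signature.

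The soft spot is in the converse reduction, where you lean on a guessed reading of reversibility. Your intermediate claim that enough padded sizes give an \emph{invertible} system from which each $\countFMS{Q}(k,\A)$ can be solved is false for the paper's flagship instance $\MSscorefun$: there $w(k,n)=\nicefrac{1}{k}$ (inverse of the support size), so the aggregate $\sum_{\alpha\in\A}\phi^w_{\mathsf{wsms}}(\A',Q,\alpha)=\sum_k k\,w(k,|\A'|)\,\countFMS{Q}(k,\A)$ collapses to $\sum_k \countFMS{Q}(k,\A)$ at \emph{every} padding level, the matrix has rank one, and the individual size counts cannot be isolated --- although the quantity actually required, $\countMS{Q}(\A)=\sum_k \countFMS{Q}(k,\A)$, is then read off from a single round of oracle calls with no padding at all. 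So the correct formulation is that reversibility is (by design) exactly the condition under which the \emph{total} count can be extracted from the padded aggregates, not that the full size distribution can. Since the paper deliberately withholds the definition of reversibility, you could not check this from the text, but as written that step of your argument fails for $\MSscorefun$; restated as recovery of $\countMS{Q}(\A)$ only, your plan is sound and matches what the imported lemma is engineered to deliver.
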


\subsection{Tractability Result for Rewritable OMQs}

With this lemma at hand, we can forget about the particular weight function $w$ and 
concentrate on the conceptually simpler task $\evalCountFMS{Q}$ of counting the number of minimal supports, per size. 
This can be achieved in polynomial time (in data complexity) whenever there is a data-independent bound on the 
size of minimal supports, 
since we can then simply iterate over all bounded-size subsets of the ABox. In particular, this holds for OMQs that can be rewritten into a $\UCQneq$, yielding the following tractability result:

\begin{theoremrep}\label{th:data-ucq}
   (Corollary of \cite[Theorem 5.2]{ourpods25})
   $\wShapley{Q}{w}\in\FP$ for every 
   "tractable weight function" $w$ and every  "Boolean@@q" "OMQ" $Q$ that is $\UCQneq$"-rewritable". 
   This implies, in particular, that $\wShapley{(\dlliter,\UCQ)}{w}$ enjoys $\FP$ data complexity. 
\end{theoremrep}
\begin{proof}
   Any "Boolean@@q" "OMQ" $Q$ that is "rewritable" into a $\UCQneq$ must be both "monotone" and `bounded', meaning that the sizes of "minimal supports" for q are bounded by a constant independent of the "database" (namely the maximum size of a $\CQneq$ therein). The result from \cite[Theorem 5.2]{ourpods25} states that in such case $\wShapley{Q}{w}$ is tractable.
\end{proof}

Theorem \ref{th:data-ucq} mentions $(\dlliter,\UCQ)$ to give a concrete example of a tractable OMQ class, 
but naturally the same holds for other prominent DL-Lite dialects (like DL-Lite$_\mathcal{F}$
and DL-Lite$_\mathsf{Horn}$) known to be $\UCQneq$-rewritable. Description logics outside of the DL-Lite family typically do not guarantee the existence of rewritings. However, techniques for identifying $\FO$-rewritable (in fact, $\UCQneq$-rewritable) OMQs for various Horn DLs are known \cite{bienvenuFirstOrderrewritabilityContainment2016a} and have been successfully implemented for OMQs in $(\EL,\CQ)$ \cite{DBLP:conf/semweb/HansenL17}. The preceding tractability result is therefore relevant for a wider range of DLs. 

Beyond providing theoretical results, rewriting is a powerful tool from a practical standpoint and has proven to be a key technique for obtaining efficient implementations of OMQA by 
leveraging existing and highly optimized relational database systems. 
Computing $\wShapley{Q}{w}$ for $\UCQneq$-rewritable OMQs $Q$ is no exception, as it can be done by evaluating simple SQL queries in parallel. 
We phrase the next result for databases, as it is relevant to both the pure database and OMQA settings.  

\newcommand{\aUCQ}{\widetilde{q}}
\begin{theorem}
\label{thm:rewritting-SQL}
For every $\UCQneq$ $\aUCQ$ and $k \in \Nat$, there exists a set $\Qneq$
of $\CQneq$ queries such that,  for every "database" $\D$,
$$\countFMS{\aUCQ}(k,\D) = \sum_{q \in \Qneq} \countAns{q}(\D) \cdot \gamma_q,$$ where \AP$\intro*\countAns{q}(\D)$ is the number of "homomorphisms" from $q$ to $\D$, and $\gamma_q$ is a rational number computable from $q$. 
Further, every $q \in \Qneq$ is of at most quadratic size.
\end{theorem}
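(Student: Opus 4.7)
The plan is to count minimal supports of size $k$ by enumerating their possible isomorphism types and reducing each count to $\CQneq$ homomorphism counts. Since each disjunct of $\aUCQ = q_1 \vee \cdots \vee q_m$ has size at most $n := |\aUCQ|$, every minimal support consists of at most $n$ facts drawn from the finite signature occurring in $\aUCQ$; in particular, it suffices to consider $k \leq n$. I would define a \emph{$k$-shape} $\theta$ as an isomorphism class of $k$-element databases over $\mathrm{sig}(\aUCQ)$, represented concretely as a list of $k$ atoms $P_1(\vec v_1),\ldots,P_k(\vec v_k)$ equipped with an equivalence relation on their variable positions. The set of $k$-shapes is finite and enumerable from $\aUCQ$. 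For each such $\theta$, I can decide, purely on the canonical representative and independently of $\D$, whether a $k$-subset of shape $\theta$ is a minimal support: it suffices that some $q_i$ has a homomorphism into the representative \emph{and} every homomorphism from every $q_j$ into that representative is surjective. Call such $\theta$ a \emph{minimal-support shape}; every minimal support of size $k$ in any $\D$ has exactly one such shape, and conversely every $k$-subset of a minimal-support shape is minimal.

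The next step is to count ordered realizations of each minimal-support shape $\theta$ in $\D$, where an ordered realization is an assignment of $\D$-constants to the variables of $\theta$ producing $k$ pairwise distinct facts. The obstacle is that the ``distinct atoms'' requirement between two same-predicate atoms $P(\vec u), P(\vec v)$ reads $\bigvee_\ell u_\ell \ne v_\ell$, a disjunction that is not a single $\CQneq$ conjunct. I would resolve this by inclusion--exclusion: for each same-predicate pair and each non-empty subset $J$ of positions, impose the \emph{equalities} $u_\ell = v_\ell$ for $\ell \notin J$ (collapsing variables) and contribute with sign $(-1)^{|J|+1}$. Jointly choosing such a pattern $\rho$ across all same-predicate pairs produces a $\CQneq$ $q_{\theta,\rho}$ with $k \leq n$ atoms of arity at most $n$, plus $O(n^2)$ equality identifications, hence of $O(n^2)$ total size. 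Inclusion--exclusion then expresses the number of ordered realizations of $\theta$ in $\D$ as $\sum_\rho (-1)^{\varepsilon(\rho)}\countAns{q_{\theta,\rho}}(\D)$ for appropriate signs $\varepsilon(\rho)$.

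Finally, each unordered $k$-subset of shape $\theta$ corresponds to exactly $|\mathrm{Aut}(\theta)|$ ordered realizations (the isomorphisms from $\theta$'s canonical representative to that subset), so assembling gives
\[
  \countFMS{\aUCQ}(k,\D) \;=\; \sum_{\theta \text{ min-supp}}\frac{1}{|\mathrm{Aut}(\theta)|}\sum_\rho (-1)^{\varepsilon(\rho)}\,\countAns{q_{\theta,\rho}}(\D),
\]
which is of the claimed form with $\Qneq := \{q_{\theta,\rho}\}$ and rational coefficients $\gamma_{q_{\theta,\rho}} := (-1)^{\varepsilon(\rho)}/|\mathrm{Aut}(\theta)| \in \lQ$ that are computable from $\theta$ and $\rho$.

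The main obstacle will be the inclusion--exclusion step: one must carefully convert the disjunctive distinctness constraints into a signed sum of $\CQneq$s while respecting the quadratic size bound, and argue that the signs compose correctly across all same-predicate pairs. A minor subtlety is that some identifications may cause two variables appearing in an inherited inequality to coincide (yielding $x \ne x$); the corresponding $q_{\theta,\rho}$ then has no homomorphisms and can simply be dropped from $\Qneq$.
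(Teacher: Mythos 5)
Your overall architecture (enumerate the finitely many isomorphism types of candidate $k$-fact minimal supports, test minimal-supportness on the representative, count occurrences in $\D$ via homomorphism counts, divide by automorphisms) is close in spirit to the paper's proof, which enumerates the $k$-atom \emph{reducts} of the disjuncts of $\aUCQ$, discards those dominated by smaller reducts, removes homomorphic duplicates, and uses coefficients $\nicefrac{1}{|\Auto{q}|}$. But there is a genuine gap in your counting step. You define an ordered realization of a shape $\theta$ as an assignment of $\D$-constants to the variables of $\theta$ whose $k$ atom-images are pairwise distinct facts, and you only enforce fact-distinctness (via inclusion--exclusion over same-predicate pairs); you never force the assignment to be \emph{injective on the variables of $\theta$}. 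A non-injective but fact-distinct realization has as image a proper quotient of $\theta$, i.e.\ a $k$-subset of a \emph{different} shape, so the claimed correspondence between realizations of $\theta$ and pairs (subset of shape $\theta$, automorphism) fails and the final formula overcounts. Concretely, take $\aUCQ=\exists x y z\, P(x,y)\wedge P(y,z)$, $k=2$ and $\D=\{P(a,b),P(b,a)\}$ with $a\neq b$. Both the three-element path shape $\{P(x,y),P(y,z)\}$ and the two-element cycle shape $\{P(x,y),P(y,x)\}$ pass your minimal-support test; the path shape admits two fact-distinct (non-injective) realizations in $\D$ and has a trivial automorphism group, while the cycle shape contributes $\nicefrac{2}{2}=1$; your formula thus yields $2+1=3$, whereas $\countFMS{\aUCQ}(2,\D)=1$.

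The repair is exactly what the paper does: add to each candidate query \emph{all} disequality atoms $x\neq y$ between distinct variables, so that every homomorphism into $\D$ is injective; this is also where the quadratic size bound comes from. Once injectivity is enforced, two distinct atoms can never collapse onto the same fact, so your inclusion--exclusion over same-predicate pairs becomes unnecessary --- which is fortunate, because as written it is also miscalibrated: imposing the equalities $u_\ell=v_\ell$ for $\ell\notin J$ with sign $(-1)^{|J|+1}$ does not compute the complement of the all-positions-equal event (the correct formulation would impose \emph{inequalities} on the positions in $J$), and the composition of signs across several same-predicate pairs is left unargued. With injectivity, all coefficients are simply the positive numbers $\nicefrac{1}{|\Auto{q}|}$, as in the paper. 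Two smaller points: if $\aUCQ$ contains constants, your shapes must record which elements are those constants (isomorphism over the bare signature does not determine minimal-supportness), and the surjectivity test in your shape classification must mean surjectivity onto the $k$ facts of the representative, not onto its domain elements.
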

\begin{proof}
    Given a query $\aUCQ \in \UCQneq$ and a number $k$, we show how to build the set $\Qneq$ with the desired properties of the statement.
    Let us say that $q'$ is a \AP""reduct"" of $q \in \CQneq$ if $q'$ is the result of collapsing variables of $q$ and removing repeated atoms; in particular, $q'$ is a homomorphic image\footnote{A "homomorphism" $h: q \homto q'$ between $q,q' \in \CQneq$ is defined in the expected way, "ie", 
if $x \neq y \in q$, then $h(x) \neq h(y)$. 
     } of $q$. Similarly, $q'$ is a "reduct" of $\aUCQ$ if it is a "reduct" of a $\CQneq$ therein.
    Consider the set $\R_k$ of all "reducts" $q$ of $\aUCQ$ such that 
    \begin{enumerate}[(a)]
        \item $q$ has exactly $k$ "relational atoms";
        \item there is no "reduct" $q'$ of $\aUCQ$ having strictly less than $k$ "relational atoms" such that $q' \homto q$. \label{eq:minimality-relatoms}
    \end{enumerate}
Let $\Q$ be the result of removing from $\R_k$ all redundant queries. Concretely, we initialize $\Q \defeq \R_k$ and we apply the following \AP""redundancy-removal-rule"" until no more queries can be deleted:
    find a query $q \in \Q$ such that $q' \homto q$ for some distinct $q' \in \Q$ and update $\Q \defeq \Q \setminus \set{q}$.
    Finally, let \AP$\intro*\Qneq$ be the result of adding, for each $q \in \Q$ and distinct $x,y \in \vars(q)$, the 
     atom $x \neq y$ to $q$. In this way, "homomorphisms" from $\Qneq$ queries must necessarily be injective.

    \begin{toappendix}
        \begin{claim}\label{cl:invertible-homomorphisms}
            If $q \in \Qneq$, $M \in \Minsups q (\D)$ and $h: q \homto M$, then $h$ is injective. Further, for any query $p \in \CQneq$ such that $p \homto M$, we have $p \homto q$.
        \end{claim}
        \begin{proof}
            The fact that $h$ is injective follows readily from the "disequality atoms" of $q$ by definition. Hence, it is invertible, and given $g: p \homto M$ we have $(h^{-1} \circ g) : p \homto q$.
        \end{proof}
    \end{toappendix}

    \begin{claimrep}\label{cl:minsup-partition}
    $\set{\Minsups q(\D)}_{q \in \Qneq}$ is a partition of $\set{M \in \Minsups \aUCQ(\D): |M| = k}$.
    \end{claimrep}
    \begin{proof}
        Let us first show 
        \[\set{M \in \Minsups \aUCQ(\D): |M| = k} = \bigcup_{q \in \Qneq} \Minsups q(\D).\] 

        \proofcase{$\subseteq$}
        For the $\subseteq$-containment, first note that if $M$ is a "minimal support" of $\aUCQ$ of size $k$, there must be some $\hat q$ in $\aUCQ$ and some "homomorphism" $h:\hat q \homto M$. Such "homomorphism" $h$ induces a "reduct" $q$ of $\hat q$, in such a way that $q^{\neq} \homto M$, where $q^{\neq}$ is the result of adding $x \neq y$ to $q$ for all $x,y \in \vars(q)$. 
        
        We now claim that either $q^{\neq}$ or a larger query ("ie", a query $p \in \Qneq$ such that $p \homto q^{\neq}$) must be in $\Qneq$. 
        If this is not the case, then it must be that $q^{\neq}$ is not in $\R_k$ because it was discarded by \cref{eq:minimality-relatoms}, which would mean that there is another "reduct" $q'$ of $\aUCQ$ with less than $k$ "relational atoms" such that $q' \homto q^{\neq} \homto M$, implying that there is a "support" strictly included in $M$ and contradicting its "minimality@minimal support".

        \proofcase{$\supseteq$}
        For the $\supseteq$-containment, note that every "minimal support" of $q \in \Qneq$ has size exactly $k$. Take any such "minimal support" $M$ of $q$. By construction, there is some $\hat q$ in $\aUCQ$ such that $\hat q \homto q$ and thus $M$ is also a "support" of $\aUCQ$. By means of contradiction, suppose $M$ is not a "minimal support" of $\aUCQ$, that is, there is $M' \subsetneq M$ and $q'$ in $\aUCQ$ such that $h:q' \homto M'$. Consider the "reduct" $q''$ of $\aUCQ$ induced by $h$, and observe that: 
            (i) $q''$ has strictly less than $k$ "relational atoms" and 
            (ii) $q'' \homto q' \homto M$.
        By \Cref{cl:invertible-homomorphisms} we then have $q'' \homto q$.
        This means that $q$ must not exist in $\Qneq$ since it must have been discarded by \cref{eq:minimality-relatoms}. In view of this contradiction, $M$ is a "minimal support" of $\aUCQ$.

        \proofcase{Disjointness} Let us finally show that parts of the partition are disjoint, that is, $\Minsups q(\D) \cap \Minsups {q'}(\D) = \emptyset$ for all pair of distinct $q,q' \in \Qneq$. By means of contradiction, assume the contrary and let $M \in \Minsups q(\D) \cap \Minsups {q'}(\D)$. 
        Let $h: q \homto M$ and $h':q' \homto M$.
        By \Cref{cl:invertible-homomorphisms} we then have $q \homto q'$.
        By the "redundancy-removal-rule" it cannot be that both distinct queries are in $\Qneq$. This contradiction then shows that there cannot be such $M \in \Minsups q(\D) \cap \Minsups {q'}(\D)$.
    \end{proof}
        An \AP""automorphism"" of $q$ is a "homomorphism" $h: q \homto q$; we denote by $\intro*\Auto{q}$ the set of all "automorphisms" of $q$.
    \begin{claimrep}
    $\countAns{q}(\D) = |\Auto{q}| \cdot |\Minsups q(\D)|$ for all $q \in \Qneq$.
    \end{claimrep}
    \begin{proof}
       In view of the previous \Cref{cl:minsup-partition}, all "minimal supports" of $q \in \Qneq$ have size $k$. Further, each "homomorphism" $q \homto \D$ induces a "minimal support", and the set of "homomorphisms" $\set{ h \mid h: q \homto \D}$ can be partitioned into $\set{h \mid h:q \homto M}_{M \in \Minsups q(\D)}$.
        Hence, it suffices to show, for an arbitrary $q \in \Qneq$ and $M \in \Minsups q(\D)$, that $\countAns{q}(M) = |\Auto{q}|$.

        \proofcase{$\leq$}
        Consider the set $H$ of all (pairwise distinct) "homomorphisms" $h: q \homto M$. 
        Let us fix $g \in H$ and observe that, by \Cref{cl:invertible-homomorphisms}, each $(h^{-1}\circ g) : q \homto q$ is an "automorphism", and further that any two distinct $h_1,h_2 \in H$ give rise to distinct "automorphisms" $h_1^{-1}\circ g$ and $h_2^{-1}\circ g$ of $q$. Hence, $\countAns{q}(\D) \leq |\Auto{q}|$.

        \proofcase{$\geq$} Since $M$ is a "support", let $g : q \homto M$.
        Suppose there are $n$ distinct "automorphisms" $h_1, \dotsc, h_n : q \homto q$. Observe that each $(g \circ h_i): q \homto M$ is a distinct "homomorphism" to $M$, and thus that $\countAns{q}(\D) \geq n= |\Auto{q}|$.
    \end{proof}

\noindent  It is easy to see that all queries $q \in \Qneq$ are of quadratic size. Together with the preceding claims, this 
  shows that $\Qneq$ and numbers $\gamma_q:=\nicefrac{1}{|\Auto{q}|}$ have the required properties. 
\end{proof}
\begin{corollary}\label{cor:redux-to-SQL}
Computing $\wShapley{\aUCQ}{w}$ for a $\UCQneq$ query $\aUCQ$ can be achieved by evaluating simple and short (quadratic) `\textsc{select count(*)}' SQL queries in parallel.
\end{corollary}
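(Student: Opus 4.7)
The plan is to chain \Cref{lem:svc-fms} with \Cref{thm:rewritting-SQL}. Since the weight functions of interest (including those yielding $\MSscorefun$, $\Sscorefun$, and $\SHARPscorefun$) are tractable and reversible, \Cref{lem:svc-fms} reduces $\wShapley{\aUCQ}{w}$ to $\evalCountFMS{\aUCQ}$, so it suffices to show how to compute $\countFMS{\aUCQ}(k,\D)$ via simple SQL. Because every minimal support of a $\UCQneq$ has size at most the maximum number of atoms appearing in any disjunct of $\aUCQ$, only a bounded range of values of $k$ needs to be considered.

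For each such $k$, \Cref{thm:rewritting-SQL} supplies a set $\Qneq$ of $\CQneq$ queries, each of size at most quadratic in $|\aUCQ|$, together with rational coefficients $\gamma_q$, such that
\[
   \countFMS{\aUCQ}(k,\D) \;=\; \sum_{q \in \Qneq} \countAns{q}(\D) \cdot \gamma_q.
\]
Each quantity $\countAns{q}(\D)$ counts homomorphisms from a $\CQneq$ query into $\D$, which is precisely what a \textsc{select count(*)} SQL statement returns. The translation from $q$ to SQL is syntactic and of linear overhead in $|q|$, so the produced SQL statement is itself of quadratic size; disequality atoms of $\CQneq$ map directly to SQL's \texttt{<>} operator, and variable reuse in $q$ corresponds to equi-join conditions in the \textsc{where} clause.

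Finally, the collection of SQL queries obtained across all relevant pairs $(k,q)$ are mutually independent and can therefore be dispatched to the database engine in parallel. The returned counts are combined by a constant-depth rational arithmetic expression (the weighted sum above) to yield each $\countFMS{\aUCQ}(k,\D)$, and the reduction of \Cref{lem:svc-fms} assembles these into the final $\wShapley{\aUCQ}{w}$. There is no real obstacle here: the corollary is essentially a repackaging of \Cref{lem:svc-fms} and \Cref{thm:rewritting-SQL}, with the only point to verify being that $\neq$-atoms translate faithfully to SQL, which is immediate.
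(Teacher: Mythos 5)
Your argument is correct and coincides with the paper's (implicit) justification: the corollary is exactly the combination of \Cref{lem:svc-fms} (for tractable, reversible weight functions such as those behind $\MSscorefun$, $\Sscorefun$, $\SHARPscorefun$) with \Cref{thm:rewritting-SQL}, iterated over the boundedly many relevant sizes $k$, plus the standard observation that counting homomorphisms of a quadratic-size $\CQneq$ is a \textsc{select count(*)} query with equi-joins and \texttt{<>} conditions. No gap; this matches the intended reasoning.
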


\subsection{Intractability Results}
Theorem \ref{th:data-ucq} applies to OMQs which are $\UCQneq$-rewritable (which is equivalent to being $\FO$-rewritable for common DLs),
and we conjecture, in line with \cite[Conjecture 5.7]{ourpods25}, that this is precisely the tractability frontier, i.e.\ if an OMQ $Q$ does not admit a $\UCQneq$-rewriting, then $\msShapley{Q}$ is $\sP$-hard. We recall a related result for regular path queries ("RPQ"s): 

\begin{theorem}\label{thm:apq}
   \cite[Theorem 5.6]{ourpods25}
   Let $w$ be a "reversible@@wf" "tractable weight function", and a \AP""regular path query"" ("RPQ") $q\defeq\+L(c,d)$ ("ie", $q$ tests if there is a path from $c$ to $d$ conforming to a regular language $\+L$). Then $\wShapley{q}{w} \in \FP$ if $\+L$ is finite or $\epsilon\in\+L$ and $c=d$, and $\sP$-hard otherwise.
\end{theorem}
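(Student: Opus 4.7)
We treat the two tractability cases and the hardness case separately.

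For the tractable cases: if $\mathcal{L}$ is finite, we can write $\mathcal{L}=\{w_1,\dots,w_m\}$ and rewrite $q$ as the $\UCQ$ $\bigvee_{i\le m} q_{w_i}$, where $q_{w_i}$ is the existentially quantified conjunction of role atoms encoding a $w_i$-labeled path from $c$ to $d$. Since $q$ is thus $\UCQ$-rewritable, Theorem~\ref{th:data-ucq} yields $\wShapley{q}{w}\in\FP$. If instead $\epsilon\in\mathcal{L}$ and $c=d$, then $\emptyset\models q$ on every database, so $\emptyset$ is the unique minimal support in any $\D$ and every fact is irrelevant; by "(Null-db)", every fact receives score $0$, which is computable in constant time.

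For hardness, assume $\mathcal{L}$ is infinite and we are not in the trivial case. By Lemma~\ref{lem:svc-fms}, it suffices to prove that $\evalCountMS{q}$ is $\sP$-hard, and we plan to reduce from the classical $\sP$-hard problem of counting simple directed $s$-$t$ paths in a graph $G=(V,E)$. Since $\mathcal{L}$ is infinite, the pumping lemma gives words $u,v,w$ with $v\neq\epsilon$ such that $uv^n w\in\mathcal{L}$ for all $n\ge 0$. Given $(G,s,t)$, we build a database $\D_G$ by adding a $u$-labeled path from $c$ to $s$, a $w$-labeled path from $t$ to $d$, and replacing each edge $(a,b)\in E$ by a $v$-labeled path from $a$ to $b$ through fresh intermediate vertices. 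Each simple $s$-$t$ path of length $k$ in $G$ then yields a minimal support of $q$ in $\D_G$ tracing the word $uv^k w\in\mathcal{L}$.

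\textbf{Main obstacle.} The main difficulty will be ensuring that these are the \emph{only} minimal supports, up to a polynomially computable factor. Additional accepting words in $\mathcal{L}$, or unintended ways of traversing the gadgets that share DFA states, could create spurious minimal supports not corresponding to simple paths of $G$. To rule these out, one must choose $u,v,w$ carefully using the structure of the minimal DFA of $\mathcal{L}$, padding the gadget-internal vertices so that any accepting run from $c$ to $d$ in $\D_G$ is forced to follow the intended pattern. The degenerate subcase $c=d$ with $\epsilon\notin\mathcal{L}$ requires a variant in which $c$ and $d$ are identified, exploiting that $\mathcal{L}$ still contains a nonempty word and hence, by infiniteness, a family $uv^*w$ of such words. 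Once the construction preserves minimal supports exactly, $\sP$-hardness of $\evalCountMS{q}$---and thereby of $\wShapley{q}{w}$---follows from the hardness of counting simple $s$-$t$ paths.
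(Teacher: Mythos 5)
First, a remark on the comparison you asked for implicitly: the paper does not prove this statement at all — it is imported verbatim from \cite[Theorem 5.6]{ourpods25} — so there is no in-paper proof to measure you against; I can only judge your argument on its own terms. Your two tractable cases are fine: a finite $\mathcal{L}$ makes $q$ equivalent to a fixed union of path CQs, so minimal supports have bounded size and \Cref{th:data-ucq} (i.e., \cite[Theorem 5.2]{ourpods25}) applies; and when $\epsilon\in\mathcal{L}$ and $c=d$ the empty set is the unique minimal support, so no fact lies in any minimal support and every score is $0$.

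For hardness, your construction is the right one, but the step you explicitly leave open — ensuring that the intended supports are the \emph{only} minimal supports — is precisely what a proof must contain, so as written the argument is incomplete. The good news is that it closes with a short observation you almost make, and your proposed remedy (choosing the pumping triple via the minimal DFA, extra padding) is unnecessary. Write the pumped family as $u v^n z\in\mathcal{L}$ (avoid reusing $w$, which already names the weight function). Because the $u$-path, the $z$-path and every edge gadget are built from fresh internal vertices, each with a single outgoing edge, every directed walk from $c$ to $d$ in $\D_G$ is forced to traverse the full $u$-path, then a sequence of \emph{complete} gadgets, then the full $z$-path; hence its label is exactly $u v^k z$ for some $k\ge 0$, and all such words are in $\mathcal{L}$. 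Consequently a subset of $\D_G$ contains an $\mathcal{L}$-labelled walk from $c$ to $d$ iff it contains any $c$--$d$ walk at all, so the minimal supports of $q$ coincide with the minimal supports of plain reachability, which are exactly the edge sets of simple $c$--$d$ paths — in bijection with the simple $s$--$t$ paths of $G$ (each augmented by the fixed $u$- and $z$-edges). In particular, your worry about additional accepting words of $\mathcal{L}$ is moot: no walk of $\D_G$ realizes any label outside $u v^* z$. The subcase $c=d$ with $\epsilon\notin\mathcal{L}$ needs only one more line: then $uz\neq\epsilon$, the empty walk is not accepting, and any edge set containing a multi-loop $c$--$c$ walk already contains a single-loop walk labelled $u v^j z$, so minimal supports are again exactly the simple $s$--$t$ paths. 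With that paragraph supplied, $\sP$-hardness of $\evalCountMS{q}$ follows from Valiant's hardness of counting simple directed $s$--$t$ paths, and \Cref{lem:svc-fms} transfers it to $\wShapley{q}{w}$ in the direction you invoke; until it is supplied, the hardness half remains a plan rather than a proof.
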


Exploiting the fact that reachability can be expressed using "atomic queries" in DLs such as $\EL$ that admit qualified existential restrictions, we get the following corollary:

\begin{corollaryrep}\label{cor:reachabilityDL-hard-data}
   Let $w$ be a "reversible@@wf" "tractable weight function", 
   and $\L$ be any DL that can express the "axiom" $\exists {r}. {A} \ic {A}$, where ${A}$ is a "concept name" and ${r}$ a "role name". Then, there exists an "OMQ" $Q\in (\L,\AQ)$ such that $\wShapley{Q}{w}$ is $\sP$-hard.
 Thus, $\wShapley{(\L,\Q)}{w}$ is $\sP$-hard in data complexity. 
\end{corollaryrep}
\begin{proof}
Consider the "RPQ" $q\defeq {r}^*(c,d)$, which tests if there is an ${r}$-path from $c$ to $d$. By Theorem \ref{thm:apq}, $\wShapley{q}{w}$ is $\sP$-hard. 
We can use the OMQ $Q=(\{\exists {r}. {A} \ic {A}\},{A}(c)) \in  (\L,\AQ)$ to simulate $q$ (with ${A}$ a fresh concept name). Indeed, it is easily verified that, for every ABox $\A$ consisting of ${r}$-assertions, $\A \models q$ iff $\A \cup \{{A}(d)\} \models Q$. Moreover, $S \in \Minsups{q}(\A)$ iff 
$S \cup \{{A}(d)\} \in \Minsups{Q}(\A \cup \{{A}(d)\})$. 
\end{proof}


\section{Combined Complexity: Atomic OMQs}
\label{sec:combined-atomic}

\begin{toappendix}
   \label{app:combined-aq}
\end{toappendix}

In light of the positive data complexity results for $\UCQneq$-rewritable OMQs,
such as those based upon DL-Lite ontologies, a natural question is whether 
we can achieve tractability even in combined complexity, i.e.\ when also 
taking into account the size of the OMQ. Naturally, this will only be possible if 
the considered class of OMQs admits $\mathsf{PTime}$ query evaluation. 
A natural candidate are atomic OMQs, i.e.\ OMQs $(\T,q)$ where $q \in \AQ$
(we shall consider restricted classes of (U)CQs in Sections \ref{sec:ucq} and \ref{sec:cf1}). 
Note that due to Lemma \ref{lem:svc-fms}, it suffices to consider $\evalCountMS{(\L,\AQ)}$ and 
$\evalCountFMS{(\L,\AQ)}$ to obtain, respectively, lower and upper bounds on $\wShapley{(\L,\AQ)}{w}$,
for any reversible and tractable $w$.

\subsection{DLs with Conjunction}
We first show that "WSMS computation" is intractable in combined complexity for atomic OMQs whenever the considered DL allows for concept conjunction. 
This is the case for the lightweight DL $\EL$ and all of its extensions, 
but also for so-called Horn dialects of DL-Lite (which enjoy tractable data complexity due to Theorem \ref{th:data-ucq}). 

\begin{propositionrep}\label{prop:conjinstancehard}
   Let $\L_\sqcap$ be the DL that only allows for 
   "axioms" of the form ${A}\sqcap {B} \ic {C}$, for ${A},{B},{C} \in \cnames$.
    Then $\evalCountMS{(\L_\sqcap,\AQ)}$  is $\sP$-hard.
\end{propositionrep}
\begin{inlineproof}
We reduce from the problem \AP$\intro*\MVC$ which is to count, given an input graph $G=(V,E)$, the number of inclusion-minimal subsets $S\subseteq V$ such that every edge $e\in E$ has at least one endpoint in $S$. 
This problem has been shown to be $\sP$-hard in \cite[Problem 4]{valiantComplexityEnumerationReliability1979}. 
We prove that $\MVC \polyrx \evalCountMS{(\L_\sqcap,\AQ)}$.
Given a graph $G=(V,E)$, we define the instance:
\begin{align*}
   \T_{G} \defeq &
   \left
   \{A_u \ic B_{(u,v)}, A_v \ic B_{(u,v)} \mid (u,v) \in E\right\} \\
    & \cup \left\{\sqcap_{e\in E} B_e \ic C\right\} \\
  \A_G \defeq & \left\{A_u(c) \mid u\in V\right\} \qquad\qquad q_G \defeq  C(c)
\end{align*}
For simplicity, $\T_G$ uses 
$\sqcap$ of arbitrary arity, but this can be simulated in a standard way (see 
\ifsubmission
   Appendix B%
\else
\Cref{app:combined-aq}%
\fi
) of the full version).
By construction, the "minimal supports" for $(\T_G,q_G)$ in $\A_G$ correspond to the minimal vertex covers of $G$. 
\end{inlineproof}
\begin{toappendix}
   \begin{proof}[Missing details in proof of \Cref{prop:conjinstancehard}]
      In order to turn $\T_G$ into a $\mathsf{MinHorn}$ ontology, we need to emulate the arbitrary conjunction $\T_\sqcap\defeq \{\bigsqcap_{e\in E} B_e \ic C\}$ and the disjunctions $A_u \sqcup A_v \ic B_{(u,v)}$ using $\mathsf{MinHorn}$ "axioms" only. The latter is trivial to do with two "axioms": $\{A_u \sqcap A_u \ic B_{(u,v)},A_v \sqcap A_v \ic B_{(u,v)}\}$. For the former, we write $E\defeq\{e_1\dots e_k\}$ introduce fresh concept names: $\T'_\sqcap\defeq \{ B_{e_1}\sqcap B_{e_2}\ic D_2, \, D_2 \sqcap B_{e_2}\ic D_3, \dots , \, D_{k-1} \sqcap B_{e_k}\ic C\}$. While $\T'_\sqcap$ and $\T_\sqcap$ are not strictly speaking equivalent, they always entail the same assertions with regard to "ABoxes" such as $\A_G$ that don’t contain any of the concept names $D_2 \dots D_{k-1}$.
   \end{proof}
\end{toappendix}

\subsection{DL-Lite Dialects with Singleton Supports}
Many of the more common dialects of "DL-Lite", however, do not allow for conjunction and instead enjoy the following "singleton-support property". 
A DL $\L$ ""has singleton supports"" if for every "atomic@@q" "OMQ" $(\T, q) \in (\L,\AQ)$ and every
ABox $\A$ consistent with $\T$, all sets in  $\Minsups{Q}(\A)$ are singletons. 

\begin{proposition}\AP\label{prop:tractableAQ}
   Let $\L$ be any DL 
   that "has singleton supports" and for which atomic OMQs admit $\mathsf{PTime}$ evaluation in combined complexity. 
  Then $\evalCountFMS{(\L,\AQ)}$ is in $\FP$ for combined complexity. 
  This holds in particular when $\L=\dlliter$. 
\end{proposition}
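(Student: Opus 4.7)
The plan is to leverage the singleton support hypothesis directly. Assume first that the input ABox $\A$ is consistent with the TBox $\T$. Then by hypothesis every $S \in \Minsups{Q}(\A)$ satisfies $|S| = 1$, so $\countFMS{Q}(k, \A) = 0$ for every $k \neq 1$, and
\[
   \countFMS{Q}(1, \A) \;=\; \bigl|\{\alpha \in \A \,:\, (\{\alpha\}, \T) \models q\}\bigr|.
\]
This value is computable in polynomial time: iterate over the at most $|\A|$ facts $\alpha \in \A$, and for each one invoke the assumed PTime atomic-OMQ evaluation procedure on $(\{\alpha\}, \T)$ and $q$, incrementing a counter on success. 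The total running time is polynomial in $|\A| + |\T| + |q|$, and the same evaluation procedure also suffices to decide whether $\A$ itself is consistent with $\T$ (via standard reductions to atomic entailment), so $\evalCountFMS{(\L, \AQ)} \in \FP$.

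For the concrete claim that $\dlliter$ satisfies both hypotheses, I would first recall that atomic OMQ entailment in $\dlliter$ is decidable in polynomial time in combined complexity: one computes in polynomial time the subsumption closure of $\T$, then uses the explicit description of the canonical model $\canmod{\A,\T}$ given in the preliminaries to decide entailment by a few simple checks against the ABox. I would then verify the singleton support property by inspecting this definition of $\canmod{\A,\T}$: the sets $A^{\canmod{\A,\T}}$ and $r^{\canmod{\A,\T}}$ are unions of contributions, each ultimately indexed by a single ABox assertion $\alpha \in \A$. Hence, if $\canmod{\A,\T} \models A(c)$ (or similarly for a role atom), there already exists one $\alpha \in \A$ with $(\{\alpha\},\T) \models q$, so $\{\alpha\}$ is itself a support and therefore a minimal one.

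A remaining subtlety, which I expect to be the main obstacle, is the case of ABoxes inconsistent with $\T$: the singleton support property covers only the consistent case, so one must separately handle minimal supports that are minimal inconsistent subsets of $\A$. For $\dlliter$ this remains tractable because inconsistencies arise from negative concept or role inclusions, giving minimal inconsistent subsets of size at most two, enumerable in $O(|\A|^2)$ time; the full algorithm then combines singleton supports entailing $q$ with minimally inconsistent pairs that are not already dominated (in the subset order) by a singleton support, stratifying the counts by size $k$. Careful bookkeeping ensures each minimal support is counted exactly once.
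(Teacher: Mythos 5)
Your proposal is correct and follows essentially the same route as the paper: enumerate the linearly many singleton subsets of $\A$ and test each with the assumed PTime atomic-OMQ evaluation procedure, justifying the singleton-support property of $\dlliter$ via the canonical model (the paper instead cites this as folklore, implicit in standard rewriting of atomic OMQs into unions of AQs). Your additional treatment of ABoxes inconsistent with $\T$ goes beyond the paper's proof, which tacitly stays within the consistent case covered by the singleton-support definition; your handling via size-at-most-two minimal inconsistent subsets of $\dlliter$ is a sound way to close that edge case.
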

\begin{proof}
Consider an OMQ $Q=(\T,q)\in(\L,\AQ)$ and an "ABox" $\A$. Since $\L$ "has singleton supports",
we know that every "minimal support" for $Q$ in $\A$ consists of a single assertion from $\A$. We can thus consider the linearly many such singleton sets and use the $\mathsf{PTime}$ evaluation procedure to check which ones entail the given atomic query. The fact that \dlliter\ "has singleton supports" 
is folklore and implicit in existing OMQA algorithms. For example, standard rewriting algorithms will rewrite OMQs from $(\text{\dlliter}, \AQ)$ into unions of AQs, cf.\  \cite{calvaneseetal:dllite}. 
\end{proof}


\section{Digression: Unions of Conjunctive Queries}\label{sec:ucq}
Our next goal will be to extend the tractability result for $(\dlliter, \AQ)$
to cover some suitable subclass of $(\dlliter, \CQ)$, where the user query is a non-atomic CQ. 
Since every $Q \in (\dlliter, \CQ)$ can be rewritten into 
a UCQ $q'$, 
one idea would be to identify conditions on OMQs that 
guarantee that the rewritten query 
belongs to some class $\C$ of UCQs
for which $\evalCountFMS{\+C}$ is known to be tractable. 

Currently, however, it has only been shown that 
$\evalCountFMS{\+C}\in\FP$ for any class $\+C$ of "CQs" 
having bounded ""generalized hypertreewidth"" and bounded `""self-join width""' \cite[Theorem 6.6]{ourpods25}.%
\footnote{Since we do not need the precise definitions, we direct the interested reader to \cite[§6.3]{ourpods25}.}
This covers in particular CQs that are both ""acyclic"" (i.e.\ the undirected graph underlying the query is acyclic)
and ""self-join free"" (i.e.\ no two atoms share the same predicate).
No combined complexity results exist for UCQs, and it is not a priori
clear if the preceding tractability result can be suitably extended to identify a `nice' class of UCQs. 

Our next result shows that positive results for well-behaved CQs do not in fact transfer to their unions. Indeed, $\evalCountMS{}$ is $\sP$-hard already for the restricted class $\AP\intro*\sjfAUCQ$ of "acyclic" "self-join free" "UCQs".

\begin{propositionrep}\label{prop:sjfUCQhard}
    $\evalCountMS{\sjfAUCQ}$ is $\sP$-hard under polynomial-time \AP""$1$-Turing reductions""\footnote{I.e., "Turing reductions" that only allow a single call to an oracle.}, even on binary "signatures" and in the absence of "constants".
\end{propositionrep}
\begin{proofsketch}
    This is an adaptation of the "shP"-hardness proof of \cite[Theorem 6]{PichlerS13} for counting the number of answers of unions of acyclic full conjunctive queries (where `full' means that queries have no existentially quantified variables). The adaptation must address some extra requirements, namely: (i) accounting for counting "minimal supports" instead of answers, (ii) ensuring that the queries are "self-join free", and (iii) working on binary signatures instead of ternary. While this makes the reduction considerably more technical, the underlying idea remains that of Pichler and Skritek.

    The reduction, from the "perfect matching counting" problem, builds a "database" $\+D$ and constant-free queries $q_1,q_2 \in \sjfAUCQ$ such that the number of "perfect matchings" on $G$ is equal to $\countMS{q_1}(\+D) - \countMS{q_2}(\+D)$.
    In fact, $q_1$ is an ("acyclic", "self-join free") "CQ" rather than a "UCQ", and its evaluation is in polynomial time by \cite[Remark 6.5 and paragraph after]{ourpods25}. Hence, this is a "$1$-Turing reduction".
\end{proofsketch}
\begin{proof}
    \newcommand{\xorina}{\textit{OR!-in}^{\text{a}}}%
    \newcommand{\xorinb}{\textit{OR!-in}^{\text{b}}}%
    \newcommand{\xorout}{\textit{OR!-out}}%
    \newcommand{\zero}{\textit{Zero}}%
    \newcommand{\one}{\textit{One}}%
    \newcommand{\rneg}{\textit{Not}}%
    \newcommand{\qval}{\textit{Val}}%
    This is an adaptation of the "shP"-hardness proof of \cite[Theorem 6]{PichlerS13} for counting the number of answers of unions of acyclic full conjunctive queries (where `full' means that queries have no existentially quantified variables). The adaptation must address some extra requirements, namely: (i) accounting for counting "minimal supports" instead of answers, (ii) ensuring that the queries are "self-join free", and (iii) working on binary signatures instead of ternary. While this makes the reduction considerably more technical, the underlying idea remains that of Pichler and Skritek.

    We reduce from the \AP""perfect matching counting"" problem, which is the problem of, given a bipartite graph $G=(V, E)$, counting the number of subsets $M \subseteq E$ (called \AP""perfect matchings"") such that each vertex of $V$ is contained in exactly one edge of $M$. This problem is known to be "shP"-complete \cite[Problem 2]{valiantComplexityEnumerationReliability1979} under polynomial-time "Turing reductions", and was later shown to be hard even under polynomial-time "$1$-Turing reductions" \cite{Zanko91}.

    We will build a "database" $\+D$ and constant-free queries $q_1,q_2 \in \sjfAUCQ$ such that the number of "perfect matchings" on $G$ is equal to $\countMS{q_1}(\+D) - \countMS{q_2}(\+D)$. In fact, $q_1$ is an ("acyclic", "self-join free") "CQ" rather than a "UCQ", and its evaluation is in polynomial time by \cite[Remark 6.5 and paragraph after]{ourpods25}. Hence, this is a "$1$-Turing reduction".
    
    Suppose $V$ is partitioned into $A \cup B$ and assume $A = \set{a_1, \dotsc, a_n}$, $B = \set{b_1, \dotsc, b_n}$, and $E \subseteq A \times B$ (observe that if $|A| \neq |B|$ there are no "perfect matchings").
    For $a_i \in A$, let $B_i = \set{b_j \in B : (a_i,b_j) \in E}$ and we define $A_j$ for $b_j \in B$ analogously.
    The signature for the queries $q_1,q_2$ has binary predicates $\xorina_{i,j}, \xorinb_{i,j},\xorout_{i,j}$, and unary predicates $\zero_{i,j}$, $\one_{i,j}$, $V_{i,j}$ for every $1 \leq i,j \leq n$.
    The database $\+D$ has 5 constants $\set{0,1,00,01,10}$ and consists of the following set of "facts" 
    \begin{align*}
      \xorina_{i,j}(0,00), \xorina_{i,j}(0,01),
      \xorina_{i,j}(1,10), \\ 
      \xorinb_{i,j}(0,00), \xorinb_{i,j}(0,10), 
      \xorinb_{i,j}(1,01), \\
      \xorout_{i,j}(00,0), \xorout_{i,j}(01,1), \xorout_{i,j}(10,1), \\ 
      V_{i,j}(0), V_{i,j}(1), \one_k(1), \zero_k(0)
    \end{align*}
    for every $i,j,k \in [n]$.  
    The $\xorina, \xorinb, \xorout$ "facts" encode part of the `OR' truth table (with two `\emph{in}' arguments and one `\emph{out}' result) in the following sense: The evaluation of the  "CQ" $q_{i,j}^{\mathsf{OR!}}(x,y,z) \defeq \exists w ~ \xorina_{i,j}(x,w) \land \xorinb_{i,j}(y,w) \land \xorout_{i,j}(w,z)$ on $\+D$  is, precisely, the OR truth table without the $(1,1,1)$ line. 
    That is, whenever both arguments $x,y$ are `true', $q_{i,j}^{\mathsf{OR!}}$ doesn't encode the output `true' nor `false' (the constants 1/0) it simply does not hold, which can be thought of as a blocking error state.
   This is done in such a way that the $n$-ary `exists unique' operator $\exists!$ can be defined by chaining the binary operator $\mathsf{OR!}$ (in the same way that chaining the standard $\mathsf{OR}$ yields $\exists$).
    Formally, the evaluation of the "acyclic" and "self-join free" "CQ"\AP\phantomintro{\poneone}:
    \begin{multline*}
      \poneone{i}(x_0, \dotsc,x_m) \defeq \exists z_1,\dotsc,z_m ~ q_{i,1}^{\mathsf{OR!}}(x_0,x_1,z_1) \land {}\\{}\land q_{i,2}^{\mathsf{OR!}}(z_1,x_2,z_2) \land 
      \dotsb \land q_{i,n}^{\mathsf{OR!}}(z_{n-1},x_m,z_m) \land \one_i(z_m)
    \end{multline*}
    for any $m\leq n$
     on $\+D$ returns all $m$-tuples $\bar a \in  \set{0,1}^m$ having exactly one component equal to 1.
  
    Let $\+X$ be the set of variables $x_{i,j}$ such that $(a_i,b_j) \in E$, and let $\+X_{i,*} \defeq \set{x_{i,j} : (a_i,b_j) \in E}$, $\+X_{*,j} \defeq \set{x_{i,j} : (a_i,b_j) \in E}$.
    The valuations of these $\+X$ variables will denote subsets of $E$: if $x_{i,j}$ is assigned $1$ it is in the subset and otherwise (we will make sure that it is assigned $0$) it is not in the subset.
    To be able to recover the assignment form the "minimal supports", we will use the predicates $V_{i,j}$: $\qval \defeq \bigwedge_{x_{i,j} \in \+X} V_{i,j}(x_{i,j})$.

    We now define 
    \[
    q_1 \defeq \qval \land \bigwedge_{i\in [n]} \poneone{i}(\bar x_i).
    \]
    where each $\bar x_i$ is the tuple of variables containing $\+X_{i,*}$.
    Each "minimal support" of $q_1$ on $\+D$ corresponds to a set of edges $E' \subseteq E$ (given by the $V_{i,j}$-facts assigned $1$) such that each vertex from $A$ appears in exactly one edge from $E'$; further the number of "minimal supports" coincides with the number of such set of edges. Observe that $q_1$ is "self-join free" and "acyclic", hence $q_1 \in \sjfAUCQ$.
    However, some of these $E'$ subsets may not be "perfect matchings", since there may be some $b_j$ which does not appear in any edge of $E'$.
    We shall now build a query $q_2$ whose number of "minimal supports" coincides with the number of such $E'$ subsets which are not "perfect matchings".

    For each $j$ let $p_j$ be the query stating that none of the variables from $\+X_{*,j}$ is assigned $1$, that is,
    \begin{align*}
      p_j &\defeq   \zero_1(x_{i_1,j}) \land \dotsb \land \zero_k(x_{i_k,j})
    \end{align*}
    where $\+X_{*,j} = \set{x_{i_1,j}, \dotsc, x_{i_k,j}}$.
    Note that each "minimal support" of $p_j \land q_1$ corresponds to such a set of edges $E' \subseteq E$ which additionally verifies that there are no edges incident to $b_j$. Further, observe that such formula is "self-join free". What is more, the number of "minimal supports" of 
    \[
        q_2 \defeq \bigvee_{j\in [n]} (p_j \land q_1)
    \]
    corresponds to the number of subsets $E' \subseteq E$ such that (i) each vertex from $A$ appears in exactly one edge from $E'$ and (ii) there exists at least one vertex $b_j \in B$ which is not adjacent to any $E'$. Observe that $q_2$ is "self-join free" since each $p_j$ is "self-join free" and there are no predicates in common between $p_j$ and $q_1$, and it is further "acyclic".
  
    It then follows that the number of "minimal supports" of $q_1$ minus the number of "minimal supports" of $q_2$ gives the number of "perfect matchings" of $G$. 
\end{proof}


\section{From Atomic to Conjunctive Queries}
\label{sec:cf1}
We return to the question of how to extend the tractability result for $(\dlliter, \AQ)$ to multiple atoms, covering suitable subclasses of
$(\dlliter, \CQ)$. As seen in Section \ref{sec:ucq}, 
we cannot simply rewrite the OMQ and appeal to tractability results for database queries.
Instead, we shall introduce a class of well-behaved "OMQs", inspired by the class of "self-join free" "CQs". 
We establish tractability for such OMQs by characterizing their minimal supports 
in terms of the minimal supports of the atomic OMQs associated with their atoms.

To simplify the presentation, we assume throughout this section that the TBox is 
formulated in $\dlliter$, but our results also apply to other DL-Lite dialects satisfying 
the conditions of Proposition \ref{prop:tractableAQ}.

\subsection{Interaction-Free OMQs}
Recall that, by \Cref{lem:svc-fms}, it suffices to study $\evalCountFMS{}$ to obtain the tractability of $\msShapley{}$ (and more generally, $\wShapley{}{w}$, for well-behaved 
$w$). The method developed in \cite{ourpods25} for counting "minimal supports" of "CQs" essentially boils down to a reduction to the well-studied problem of counting the homomorphisms of the "CQ" into the database. 
The main issue is that, for arbitrary "CQs", it is possible that several 
homomorphisms map to the same "minimal support". Consider for instance the "CQ" $\exists xy.r(x,y)\land r(y,x)$ and its "minimal support" $\{r(c,d),r(d,c)\}$, which is the image of two homomorphisms: $(x,y)\mapsto (c,d)$ and $(x,y)\mapsto (d,c)$. 
Observe that such situations cannot arise for 
"self-join free" "CQs", as  
each "fact" can only be used to satisfy a single "atom" of the query. 
As a consequence, counting
"minimal supports" reduces to 
counting homomorphisms of the "CQ" into the database, 
which is tractable if there is a bound  on the "generalized hypertreewidth" of the considered "CQs" \cite{PichlerS13}. 
Our aim will be to exhibit a class of OMQs which retains the desirable property that a single fact may not be used to 
satisfy multiple query atoms, thereby allowing us to characterize the minimal supports of such OMQs in terms of the minimal supports of the atomic OMQs of their atoms. 

We now define our tractability criterion. 
\AP
Let $\intro*\anon$ be a special constant not in $\inames$, and for every subset $C\subseteq \inames$ denote by
\AP
$\intro*\withanon{C}$ the set $C\cup\{\anon\}$. Given an ABox $\A$, 
TBox $\T$, "CQ" $q$ and assignment
$\mu : \vars(q) \to \withanon{\const(\A)}$, we write \AP$(\A,\T) \intro*\modelsmu q$ if there exists a "homomorphism" $h: q \to \canmod{\A,\T}$ such that, for every $x\in \vars(q)$, $h(x) = \mu(x)$ if $\mu(x)\in\const(\A)$ and $h(x) \notin \const(\A)$ if $\mu(x)=\anon$. Essentially, the variables assigned to $\anon$ 
map to "anonymous@@c" (non-ABox) elements of the canonical model $\canmod{\A,\T}$.

An "OMQ" $(\T,q) \in (\dlliter, \CQ)$ 
is ""interaction-free"" if, for every "assertion" $f$, atoms $\alpha,\beta$ of $q$ and assignments $\mu_{\alpha} : \vars(\alpha) \to \withanon{\const(f)}$, $\mu_{\beta} : \vars(\beta) \to \withanon{\const(f)}$, we cannot have both $(\{f\},\T)\modelsmu[\mu_{\alpha}] \alpha$ and $(\{f\},\T)\modelsmu[\mu_{\beta}] \beta$, unless $(\alpha,\mu_{\alpha}) = (\beta,\mu_{\beta})$.
 We denote by $\intro*\interfree$
the class of all "interaction-free" "OMQs" in $(\dlliter, \CQ)$.

We believe
 that $\interfree[\dlliter,\CQ]$ is a practically relevant class of OMQs:  8 of the 14 queries from the well-known LUBM benchmark \cite{DBLP:journals/ws/GuoPH05} correspond to "interaction-free" OMQs and the remaining 6 are "interaction-free" after removing obviously redundant atoms. 
In the case where 
 $\T=\emptyset$, the above condition can only be violated by two distinct atoms $\alpha,\beta$ and two assignments $\mu_\alpha:\vars(\alpha)\to \inames, \mu_\beta:\vars(\beta)\to \inames$ (with no $\anon$ in their image) such that $\mu_\alpha(\alpha)=\mu_\beta(\beta)$. As it turns out, this corresponds to saying that $\alpha$ and $\beta$ are \AP`""mergeable""' in the jargon of \cite[§6.3]{ourpods25}.
The notion of "interaction-free" thus generalizes the notion of queries with no "mergeable" atoms (corresponding to "self-join width" 0)
, which includes all "self-join free" "CQs".

\begin{example}
\begin{enumerate*}[(a)]
\item The "OMQ" $(\emptyset,\exists x. r(c,x)\land r(d,x))$,
   with distinct $c,d\in\inames$, is "interaction-free" despite its "self-join", because no fact can satisfy both $r(c,x)$ and $r(d,x)$.
\item The OMQ $(\{\exists r \ic A; \exists r^- \ic A\},\exists x. A(x))$ isn’t "interaction-free" despite the query having a single atom, because the fact $r(c,d)$ satisfies it in two different ways ($x\mapsto c$ and $x\mapsto d$).
\item The OMQ $(\exists x,y. A(x)\land r(x,y),\{A\ic \exists r\})$ isn’t "interaction-free" either, because the fact $A(c)$ would satisfy both atoms thanks to the ontology.
\end{enumerate*}
\end{example}

\subsection{Theorem Statement and Proof Idea}

As previously mentioned, 
we aim to reduce the problem of counting the "minimal supports" of the input OMQ to counting the minimal supports of its component atomic OMQs. Formally, we prove:

\begin{theorem}\label{th:cfdllite}
   Let $\C$ be a subclass of $\interfree[\dlliter,\CQ]$ such that the set of queries $\{q \mid (\T, q) \in \C\}$ has bounded "treewidth".
   Then $\evalCountFMS{\C} \polyrx \evalCountFMS{(\dlliter,\AQ)}$. Further, $\evalCountFMS{\C} \in \FP$.
\end{theorem}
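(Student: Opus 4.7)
The plan is to reduce counting minimal supports of an interaction-free OMQ $(\T,q)\in\C$ to two tractable subproblems: (i) counting, for each atom $\alpha\in q$ and each constrained assignment $\nu$ of $\vars(\alpha)$, the singleton minimal supports of the atomic OMQ $(\T,\alpha)$ compatible with $\nu$, which is handled by the oracle for $\evalCountFMS{(\dlliter,\AQ)}$; and (ii) aggregating these atomic counts over all coherent assignments $\mu\colon\vars(q)\to\withanon{\const(\A)}$ of the whole query, which is handled by a Yannakakis-style dynamic program on a tree decomposition of $q$ of width at most $w$.

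The structural heart is a \emph{decomposition lemma}: for interaction-free $(\T,q)$ and every ABox $\A$, the minimal supports of $(\T,q)$ in $\A$ are in a size-preserving bijection with pairs $(\mu,(f_\alpha)_{\alpha\in\atoms(q)})$ where $\mu\colon\vars(q)\to\withanon{\const(\A)}$ witnesses $(\A,\T)\modelsmu q$ and each $f_\alpha\in\A$ is such that $\{f_\alpha\}$ is a minimal support of the atomic OMQ $(\T,\alpha)$ under $\mu|_{\vars(\alpha)}$ (recall that $\dlliter$ atomic OMQs enjoy singleton supports by \Cref{prop:tractableAQ}). The forward direction, namely that $(\mu,(f_\alpha))\mapsto\bigcup_\alpha\{f_\alpha\}$ is a minimal support of size exactly $|\atoms(q)|$, uses interaction-freeness twice: the pairs $(\alpha,\mu|_{\vars(\alpha)})$ are pairwise distinct (assuming $q$ has no duplicate atoms, which we may assume w.l.o.g.), so no single fact is re-used; and any proper subset would drop the sole witness of some atom. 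The converse extracts $(\mu,(f_\alpha))$ canonically from a minimal support $S$: by interaction-freeness every $f\in S$ is the witness of a unique $(\alpha,\nu_\alpha)$ pair, the $\nu_\alpha$'s glue consistently into a single $\mu$ on shared variables via the underlying homomorphism, and minimality forces exactly one such witness per atom.

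Given the lemma, $\countFMS{(\T,q)}(k,\A)$ vanishes unless $k=|\atoms(q)|$, in which case it equals $\sum_{\mu}\prod_{\alpha\in q}N(\alpha,\mu|_{\vars(\alpha)})$, where $N(\alpha,\nu)$ is the number of facts $f\in\A$ such that $\{f\}$ is a minimal support of $(\T,\alpha)$ under $\nu$. This is a textbook sum-product over the variables of $q$, so we fix a tree decomposition $(T,(X_t)_t)$ of $q$ of width $w$ and tabulate, at each node $t$ and each assignment $\sigma\colon X_t\to\withanon{\const(\A)}$, the partial sum $F_t(\sigma)=\sum_\mu\prod_\alpha N(\alpha,\mu|_{\vars(\alpha)})$ where $\mu$ ranges over extensions of $\sigma$ to the subtree variables and $\alpha$ ranges over the atoms already placed in that subtree. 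Introduce/forget/join transitions are standard, the factor $N(\alpha,\cdot)$ is multiplied in when $\alpha$ is first processed, and with only $(|\const(\A)|+1)^{w+1}$ assignments per bag the DP runs in polynomial time.

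Each atomic count $N(\alpha,\nu)$ reduces to a constant number of oracle calls to $\evalCountFMS{(\dlliter,\AQ)}$: substituting the constants prescribed by $\nu$ into $\alpha$ yields a standard atomic OMQ, and a constant-size inclusion--exclusion (over which of the remaining positions land on $\const(\A)$ rather than on anonymous elements of $\canmod{\A,\T}$, as $\nu$ dictates) pins down $N(\alpha,\nu)$, since atoms in $\dlliter$ have arity at most two. This yields the $\polyrx$ reduction, and combining it with $\evalCountFMS{(\dlliter,\AQ)}\in\FP$ from \Cref{prop:tractableAQ} gives $\evalCountFMS{\C}\in\FP$. The main obstacle is the decomposition lemma itself: without interaction-freeness, distinct $(\mu,(f_\alpha))$ can collapse onto the same minimal support (e.g.\ the product formula applied to $r(x,y)\land r(x,z)$ over $\A=\{r(a,b)\}$ predicts a size-$2$ contribution, whereas the unique minimal support has size $1$), breaking both the bijection and the correctness of the DP; interaction-freeness is precisely the structural ingredient that rules out such collapses.
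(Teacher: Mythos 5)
Your overall strategy is the same as the paper's: decompose the minimal supports of an interaction-free OMQ into one supporting fact per atom (exploiting the singleton-support property behind \Cref{prop:tractableAQ}) and aggregate over assignments by a sum--product computation of bounded-treewidth flavour (the paper phrases this as weighted-CQ evaluation over a weighted database rather than an explicit tree-decomposition DP, which is immaterial). The genuine gap is in your decomposition lemma, namely the requirement that $\mu$ witness $(\A,\T)\modelsmu q$, i.e.\ that the assignment be realizable in the canonical model of the \emph{whole} ABox. A minimal support $S\subsetneq\A$ may rely on an anonymous witness that exists in $\canmod{S,\T}$ but is absent from $\canmod{\A,\T}$, because some fact outside $S$ provides a named witness that preempts it. Concretely, take $\T=\{A\ic\exists r\}$, $q=\exists y\, r(c,y)$, $\A=\{A(c),r(c,d)\}$: this OMQ is interaction-free and both $\{r(c,d)\}$ and $\{A(c)\}$ are minimal supports, so $\countFMS{(\T,q)}(1,\A)=2$; yet $y\mapsto\anon$ is not realizable in $\canmod{\A,\T}$ (the anonymous $r$-successor of $c$ does not exist there, being preempted by $d$), so your bijection, and hence your sum over witnessing $\mu$ of $\prod_\alpha N(\alpha,\mu|_{\vars(\alpha)})$, yields $1$. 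Note that your own converse construction extracts exactly the excluded pair $(\{y\mapsto\anon\},A(c))$ from the homomorphism into $\canmod{\{A(c)\},\T}$, so the lemma is internally inconsistent. The same confusion threatens your computation of $N(\alpha,\nu)$ if the inclusion--exclusion is read relative to anonymous elements of $\canmod{\A,\T}$: the correct quantities are per single fact, e.g.\ for $\nu=\{x\mapsto c,\,y\mapsto\anon\}$ one must count the facts $f$ with $(\{f\},\T)\models\exists R(c)$ and $(\{f\},\T)\not\models R(c,d)$ for every $d$. This is precisely the phenomenon of \Cref{ex:anonc}, and it is why the paper defines the weights of the fresh facts $R(c,c_\alpha)$ via single-fact conditions in \Cref{ssec:anonc}.

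Beware also that the fix is not simply to drop the witnessing condition: summing over all $\mu:\vars(q)\to\withanon{\const(\A)}$ with per-fact counts overcounts as soon as a \emph{shared} variable is sent to $\anon$. For instance, with $\T=\{A_1\ic\exists r_1,\ \exists r_1^-\ic B_1,\ A_2\ic\exists r_2,\ \exists r_2^-\ic B_2\}$, $q=\exists x\, B_1(x)\land B_2(x)$ (interaction-free) and $\A=\{A_1(c),A_2(c)\}$, each atom has a per-fact supporter under $x\mapsto\anon$, yet $\A\not\models(\T,q)$, so the unrestricted product formula returns $1$ instead of $0$. What makes the paper's argument sound is \Cref{lem:cfsharvar}: for interaction-free OMQs no homomorphism into a canonical model maps a shared variable to an anonymous element, so the sum ranges over assignments sending shared variables into $\const(\A)$ and only unshared positions to $\anon$, with the $\anon$-weights defined per fact as above (and single-atom components, where no variable is shared, handled by a direct oracle call). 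Your proposal neither states nor proves such a lemma, and it is needed; its proof uses the tree shape of $\dlliter$ canonical models together with interaction-freeness. A minor further point: evaluating $N(\alpha,\nu)$ with an $\anon$ position requires subtracting something like $\sum_{d\in\const(\A)}\countMS{(\T,R(c,d))}(\A)$, i.e.\ polynomially many oracle calls rather than a constant number --- harmless for the Turing reduction, but not as claimed.
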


For the proof of \Cref{th:cfdllite}, we focus on the reduction, since the tractability will readily follow from it and \Cref{prop:tractableAQ}. Before diving into the details, we first give the following incomplete but informative formula: 
\begin{equation}\label{eq:intuit}
   \countMS{(\T,q)}(\A) \approx \sum_{\mu : \vars(q) \to \const(\A)} \prod_{\alpha\in q} \countMS{(\T,\mu(\alpha))}(\A)
\end{equation}
Intuitively, this formula enumerates every possible assignment $\mu$, computes the number of "minimal supports" associated with $\mu$ by multiplying the number of possibilities for each atom, 
then sums everything up. This formula does not give the correct result for all OMQs in $\interfree$ (hence the $\approx$), for reasons that will be explained and addressed in \Cref{ssec:anonc}, but it does work in many cases thanks to the following consequence of the absence of "interactions".

\begin{toappendix}
To preface the proofs of \Cref{lem:DqAbar,lem:cfindepsupps}, we consider a few properties of "interaction-free" "OMQs".

On the one hand, from the definition of `"interaction-free"', we know that a given "fact" $f\in\A$ can only be associated to a single pair $(\alpha,\mu_{\alpha})$ with $\alpha$ an atom of $q$ and $\mu_{\alpha} : \vars(\alpha) \to \withanon{\const(\A)}$. From there we can partition $\A$ into $\A\defeq \mathsf{Irr} \uplus\biguplus_{\alpha,\mu_{\alpha}} \suppfacts(\alpha,\mu_{\alpha})$ where we call the elements of
\AP $\intro*\suppfacts(\alpha,\mu_{\alpha})$ the ""supporting facts"" for $(\alpha,\mu_{\alpha})$, that is the facts such that $(\{f\},\T) \modelsmu[\mu_{\alpha}] \alpha$, and where $\mathsf{Irr}$ is a set of "facts" $f$ that are "irrelevant" to $(\T,q)$ because they are not "supporting@@fact" any pair $(\alpha,\mu_\alpha)$. 
To reiterate, the union is disjoint because a given $f\in\A$ can at most be "supporting@@fact" a single pair $(\alpha,\mu_{\alpha})$.

On the other hand, any "minimal support" $S$ for $(\T,q)$ in $\A$ must contain, for every $\alpha\in q$, a fact $f_\alpha$ "supporting@@fact" $(\alpha,\mu_{\alpha})$ for some $\mu_{\alpha} : \vars(\alpha) \to \withanon{\const(\A)}$, with the only condition that the $\mu_{\alpha}$ must all agree on the values of the variables in order to form a complete $\mu : \vars(q)\to\withanon{\const(\A)}$.
Additionally, $S$ cannot contain anything else without breaking minimality.
Overall this means that the set of all "minimal supports" for $(\T,q)$ in $\A$ can be bijectively expressed as
\begin{equation}\label{eq:bij-itf}
   \Minsups{(\T,q)}(\A) \simeq \biguplus_{\mu:\vars(q)\to\withanon{\const(\A)}} \prod_{\alpha\in q} \suppfacts(\alpha,\mu|_{\alpha})
\end{equation}
where $\simeq$ indicates the existence of a bijection between two sets, $\prod$ denotes the Cartesian product of sets and $\mu|_{\alpha}$ the restriction of $\mu$ to $\vars(\alpha)$.
\end{toappendix}

\begin{lemmarep}\label{lem:cfindepsupps}
Let $(\T,q) \in \interfree$, and $\A$ be an "ABox". Then for every assignment $\mu : \vars(q) \to \const(\A)$:
\[
   \countMS{(\T,\mu(q))}(\A) = \prod_{\alpha\in q} \countMS{(\T,\mu(\alpha))}(\A)
\]
\end{lemmarep}
\begin{proof}
We wish to apply \Cref{eq:bij-itf} to express the cardinality of $\Minsups{(\T,\mu(q))}(\A)$ for a specific $\mu$.
To avoid confusion with the $\mu$ in the union, we will change the notations and simply focus on computing $\Minsups{(\T,q_{\mathsf{triv}})}(\A)$ for a query $q_\mathsf{triv}$ with no variable. We then denote by $\epsilon$ the only (trivial) assignment from $\emptyset$ to any set, and apply \Cref{eq:bij-itf} to $q_{\mathsf{triv}}$:
$\Minsups{(\T,q_{\mathsf{triv}})}(\A) \simeq \prod_{\alpha\in q_{\mathsf{triv}}} \suppfacts(\alpha,\epsilon|_{\alpha})$.
Given the fact that the assignment $\epsilon$ does nothing and conflating a singleton with its element, the elements of $\suppfacts(\alpha,\epsilon|_{\alpha})$ are then by definition the "minimal supports" for $(\T,\alpha)$, hence
$\Minsups{(\T,q_{\mathsf{triv}})}(\A) \simeq \prod_{\alpha\in q_{\mathsf{triv}}} \countMS{(\T,\alpha)}(\A)$.
Renaming $q_{\mathsf{triv}}$ back to $\mu(q)$ gives the desired equality.
\end{proof}

Even when \Cref{eq:intuit} holds, it has two issues: 
(a) it does not directly yield a polynomial-time procedure 
as it sums over an exponential number of mappings $\mu$ (this will be addressed in \Cref{ssec:effsum}); and 
(b) 
it computes the value of $\countMS{}$ while we actually need $\countFMS{}$. However, (b) is not actually a problem: as we observed in the proof of \Cref{lem:cfindepsupps}, $\T$ "has singleton supports" and $q$ is "interaction-free", so the "minimal supports" for $(\T,q)$ all have the same size as $q$.

\subsection{Efficient Summation Over Assignments}\label{ssec:effsum}

Observe that in \Cref{eq:intuit} we can ignore all assignments $\mu$ such that $(\A, \T) "\not\models@\modelsmu"_{\mu} q$ since the summand is 0. In other words, \Cref{eq:intuit}
is a sum over all "homomorphisms" $h: q \to \canmod{\A,\T}$ whose image is contained in $\const(\A)$. 
As it turns out, efficient summation over "homomorphisms" has been studied in the context of databases annotated with semirings, or
""weighted databases"". These are defined as
\AP$\D = (\D^\dag,\omega)$, where $\D^\dag$ is a "database" and $\omega: \D^\dag \to \Nat$ assigns, to each "fact", a `weight'.
The weight $q(\D) \in \Nat$ associated to the evaluation of a
"Boolean@@q" "CQ" $q=\exists\vec{x}.\bigwedge_{i=1}^k\alpha_i$ to a "weighted database" $\D$ is
$q(\D)\defeq \sum_{h: q \homto \D^\dag} \prod_{i=1}^k \omega(h(\alpha_i))$.
\Cref{eq:intuit}
can thus be seen as
$q$ applied to the "weighted database"
$\D^{q}_{\A} \defeq (\D^\dag,\set{\beta \mapsto \countMS{(\T,\beta)}(\A)}_{\beta \in \D^\dag})$ for $\D^\dag = \set{\mu(\alpha) \mid \alpha\in q, \mu: \vars(\alpha) \to \const(\A)}$.
Since each atom of $q$ contains at most 2 variables,
$\D^{q}_{\A}$ has at most $|q|\cdot|\const(\A)|^2$ "facts". 
For each "fact" $\beta \in \D^\dag$, the weight $\countMS{(\T,\beta)}(\A)$ can be computed by a call to the $\evalCountMS{(\dlliter,\AQ)}$ oracle.
Overall, $\D^{q}_{\A}$ can be built in polynomial time with the oracle. The last step is to compute
$q(\D^{q}_{\A})$,
which can be done in polynomial time for any class of "CQs" with bounded "treewidth".\footnote{This is a trivial adaptation of the algorithm of \cite[Proposition~3.5]{FlumG04} for counting homomorphisms.
It is also a basic case of the more general tractability results of \cite{KhamisNR16,JoglekarPR16}.}

\subsection{Variables Mapped Outside the ABox} 
\label{ssec:anonc}

As mentioned earlier, \Cref{eq:intuit} is inaccurate with respect to $\dlliter$, as evidenced by the following example.

\begin{example}\label{ex:anonc}
   Take $\T= \{A\ic \exists r; \exists r^- \ic B\}$, $q=\exists x.B(x)$ and $\A=\{A(c)\}$. Then the left-hand side of \Cref{eq:intuit} equals $1$ because $\A$ is a minimal support for $(\T,q)$, but the right-hand side equals 0 because the only possible $\mu$ is $x\mapsto c$, but $(\A,\T) \not \models B(c)$.
\end{example}

The issue highlighted by \Cref{ex:anonc} is that some "minimal supports" may only be witnessed by homomorphisms 
that map some variable to an "anonymous element" of $\canmod{\A,\T}$ rather than an ABox "individual". Such minimal supports will thus  
be missed by \Cref{eq:intuit}. 
However, by exploiting the structure of canonical models in $\dlliter$, we can show that the "interaction-free" condition ensures that all
\AP""shared variables"" 
(i.e.\ variables $x\in\vars(q)$ that appear in multiple atoms of $q$)
are necessarily mapped to ABox individuals: 

\begin{lemma}\label{lem:cfsharvar}
   Let $Q=(\T,q)\in\interfree$ and $x$ be a "shared variable" of $q$. Then, for every "homomorphism" $h: q \homto \canmod{\A,\T}$, 
   we have $h(x)\in \const(\A)$. 
\end{lemma}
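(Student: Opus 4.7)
The plan is to argue by contradiction. Suppose $x$ occurs in two distinct atoms $\alpha,\beta$ of $q$ and that some "homomorphism" $h: q \homto \canmod{\A,\T}$ sends $x$ to an "anonymous element" $w = aR_1\ldots R_n$ with $n \geq 1$. My aim is to exhibit a single ABox fact $g \in \A$ for which the pairs $(\alpha,\mu_\alpha)$ and $(\beta,\mu_\beta)$ are both witnessed on $(\{g\},\T)$ for suitable assignments $\mu_\alpha,\mu_\beta$, directly violating the "interaction-free" condition.

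The first step is to extract $g$. Since $w \in \Delta^{\canmod{\A,\T}}$, the canonical-model definition gives $(\A,\T)\models \exists R_1(a)$. Invoking the "singleton-support property" of $\dlliter$ on atomic queries (implicit in \Cref{prop:tractableAQ}) applied to the "atomic query" $\exists R_1(a)$, there is a single $g \in \A$ with $(\{g\},\T) \models \exists R_1(a)$; in particular $a \in \const(g)$. Unrolling the $\dlliter$ canonical-model definition, one checks that the anonymous subtree of $\canmod{\A,\T}$ rooted at $a$ and starting with $R_1$ (in particular $w$ itself) sits inside $\canmod{\{g\},\T}$ with identical interpretations of concept and role names on this subtree: the seed $\exists R_1(a)$ is witnessed by $g$, the chain conditions $\T\models \exists R_i^-\ic \exists R_{i+1}$ and the concept/role memberships of anonymous elements are TBox-only, and the ``no ABox successor'' clause transfers from $\A$ to $\{g\}\subseteq\A$ by monotonicity of entailment.

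The next step relies on a key geometric fact about $\dlliter$ canonical models: every neighbour, under any role, of an anonymous element $aR_1\ldots R_m$ is either its predecessor (equal to $a$ if $m=1$, anonymous otherwise) or a successor $aR_1\ldots R_m R_{m+1}$ (anonymous). Since $h(x)=w$ is anonymous and $\alpha,\beta$ have at most two variables each, every variable $y$ of $\alpha$ or $\beta$ must be mapped by $h$ either to $a$ or to some anonymous element of the preserved subtree; in particular $h(y)$ cannot be an ABox individual other than $a$. Setting $\mu_\gamma(y) := h(y)$ if $h(y)\in\const(g)$ and $\mu_\gamma(y):=\anon$ otherwise, for $\gamma\in\{\alpha,\beta\}$, the restriction $h|_{\vars(\gamma)}$ becomes a homomorphism into $\canmod{\{g\},\T}$, giving $(\{g\},\T) \modelsmu[\mu_\gamma] \gamma$. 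Because $\alpha \neq \beta$ forces $(\alpha,\mu_\alpha)\neq(\beta,\mu_\beta)$, interaction-freeness of $(\T,q)$ is violated.

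The hard part is not any individual step but their orchestration: one must simultaneously exploit the singleton-support property to isolate the single seed fact $g$, use the tree shape of the $\dlliter$ canonical model to confine the images of all variables of $\alpha,\beta$ to the portion preserved under shrinking $\A$ to $\{g\}$, and fit this together so that the produced $\mu_\alpha,\mu_\beta$ genuinely take values in $\withanon{\const(g)}$. Only when all three pieces align can the "interaction-free" definition be invoked to derive the contradiction.
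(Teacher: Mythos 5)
Your proof is correct and follows essentially the same route as the paper's: contradiction via an anonymous image of the shared variable, confinement of the neighbouring terms to $a$ and the anonymous subtree below $aR_1$, extraction of a single witnessing fact for $\exists R_1(a)$ (the paper asserts this directly rather than naming the singleton-support property), preservation of that subtree in $\canmod{\{f\},\T}$, and then a violation of "interaction-free" via the two induced assignments. One harmless wording slip: singleton supports give you \emph{some} singleton support $\{g\}$, not a unique one, but your argument only needs existence.
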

\begin{proof}
Assume 
for a contradiction that $h: q \homto \canmod{\A,\T}$ and $h(x)\not \in \const(\A)$ for some shared variable $x$. 
From the definition of $\canmod{\A,\T}$, we know that $h(x) = a R_1 \ldots R_n$ for some $a \in \const(\A)$ and roles $R_i$. 
For convenience we suppose $R_1 \in \rnames$, but the proof is analogous if $R_1$ is an inverse role. 
As $x$ is shared, it must appear in at least two distinct atoms $\alpha_1$ and $\alpha_2$  of $q$.  
If $\alpha_i=A(x)$, then we must have $h(x) \in A^{\canmod{\A,\T}}$. 
If $\alpha_i = s(x,y)$ or $\alpha_i=s(y,x)$, then $h(y)$ must either be equal to the unique `predecessor' of $h(x)$, which is $a R_1 \ldots R_{n-1}$, 
or to some immediate `successor' of $h(x)$, which must have the form $a R_1 \ldots R_n R_{n+1}$ for some role $R_{n+1}$.  This follows from the way roles are interpreted in $\canmod{\A,\T}$. 
Thus, all terms in $\alpha_1$ and $\alpha_2$ must be mapped by $h$ either to $a$ or to some anonymous element with prefix $a R_1$. 

The existence of $a R_1 \ldots R_n$ in the domain 
means there is some assertion $f \in \A$ such that $(\{f\}, \T) \models \exists R_1(a)$ and $(\{f\}, \T) \not \models R_1(a,b)$ for any individual $b$. 
It follows that 
$\canmod{\{f\},\T}$ will also contain $a R_1$, and in fact all of the anonymous elements of $\canmod{\A,\T}$ having prefix $a R_1$, and it will interpret concept and role names on these elements in precisely the same way as in $\canmod{\A,\T}$. Thus, $h$ witnesses the satisfaction of $\alpha_1$ and $\alpha_2$
in $\canmod{\{f\},\T}$. But this means that we can use $h$ to define 
assignments $\mu_1: \vars(\alpha_1) \to \withanon{\const(f)}$, $\mu_{2} : \vars(\alpha_2) \to \withanon{\const(f)}$
such that both $(\{f\},\T)\modelsmu[\mu_1] \alpha_1$ and $(\{f\},\T)\modelsmu[\mu_2] \alpha_2$. 
This is impossible as $\alpha_1 \neq \alpha_2$ and $(\T,q)$ is "interaction-free".
\end{proof}

We 
now have three kinds of query atoms to consider. First, those that contain no "shared variables", 
which can be treated by a separate call to the $\evalCountMS{(\dlliter,\AQ)}$ oracle, due to the following easy lemma:

\begin{lemmarep}\label{lem:cfvarcc}
Let $(\T,q)\in\interfree$ with $q=q_1\land q_2$ such that $\vars(q_1)\cap\vars(q_2) = \emptyset$. Then for any "ABox" $\A$, $\countMS{(\T,q)}(\A)=\countMS{(\T,q_1)}(\A)\times\countMS{(\T,q_2)}(\A)$.
\end{lemmarep}
\begin{proof}
Given that $\vars(q_1)\cap\vars(q_2) = \emptyset$, the union of any respective "minimal supports" for $(\T,q_1)$ and $(\T,q_2)$ is a "support" for $(\T,q)$, and conversely any "support" for $(\T,q)$ must contain a "minimal support" for both $(\T,q_1)$ and $(\T,q_2)$. Finally, since $(\T,q)$ is "interaction-free", any respective "minimal supports" for $(\T,q_1)$ and $(\T,q_2)$ must be disjoint, which overall means the set of "minimal supports" for $(\T,q)$ is exactly the Cartesian product of the "minimal supports" for $(\T,q_1)$ and those for $(\T,q_2)$.
\end{proof}

Next, we have the atoms that contain only "shared variables". By \Cref{lem:cfsharvar}, their variables must 
be mapped to $\const(\A)$, hence such atoms are already accounted for by the "weighted database" $\D^{(\T,q)}_{\A}$ built in \Cref{ssec:effsum}.

The third class of atoms are 
the role atoms which contain one "shared variable" and one "unshared variable". 
These will be addressed by constructing a "weighted database" $\overline{\D}^{(\T,q)}_{\A}$ that extends 
$\D^{(\T,q)}_{\A}$ with some extra facts. 
Consider one such atom $\alpha=R(x,y)\in q$, with $x$ and $y$ being "shared@@var" and "unshared@@var" respectively, and $R\in\NRpm$ (since the "unshared variable" could come first).
Again by \Cref{lem:cfsharvar}, 
we know that $x$ must necessarily be instantiated by an "individual" $c\in \const(\A)$, but
$y$ might be mapped to an "anonymous element".  We thus add to $\overline{\D}^{(\T,q)}_{\A}$ a fact $R(c,c_\alpha)$, with $c_\alpha$ a fresh "individual", for every "individual" $c\in \const(\A)$ such that, for some
$f\in\A$, $(\{f\},\T)\models \exists R(c)$ but $(\{f\},\T)\not\models R(c,d)$ for all $d\in\A$, and set the weight of this $R(c,c_\alpha)$ to be the number of such $\{f\}$.

\begin{lemmarep}\label{lem:DqAbar}
   Let $(\T,q)\in \interfree$ such that $q$ is "connected@@q" and $|q|\ge 2$. Then $q\left(\overline{\D}^{(\T,q)}_{\A}\right) = \countMS{(\T,q)}(\A)$.
\end{lemmarep}
\begin{proof}
Given \Cref{eq:bij-itf}
and the definition of how to evaluate "weighted databases", it only remains to show that, for every possible $(\alpha,\mu_{\alpha})$, we have $\omega(\mu_{\alpha}(\alpha)) = |\suppfacts(\alpha,\mu_{\alpha})|$, where $\omega$ is the weight function of the "weighted database" $\overline{\D}^{(\T,q)}_{\A}$ and where $\mu_{\alpha}(\alpha)$ is interpreted as $R(c,c_{\alpha})$ whenever it should be $R(c,\anon)$.

First consider the case where $\mu_{\alpha}(\alpha)$ is of the form $r(c,d)$ with $c,d\in\const(\A)$.
In that case we have $|\suppfacts(\alpha,\mu_{\alpha})| = \countMS{(\T,\mu_{\alpha}(\alpha))}(\A)$ as shown in the proof of \Cref{lem:cfindepsupps}, which is the precise weight that has been assigned to $\mu_{\alpha}(\alpha)$ in $\D^{(\T,q)}_{\A}$ (see \Cref{ssec:effsum}). 

Now assume that $\mu_{\alpha}(\alpha)$ contains $\anon$. Since $q$ is "connected@@q" with at least 2 "atoms", each of them must contain at least one "shared variable", hence the form $r(\anon,\anon)$ is excluded by \Cref{lem:cfsharvar}.
The only case remaining is $R(c,\anon)$ with $c \in \inames$.
Again the elements of $\suppfacts(\alpha,\mu_{\alpha})$ are some form of "minimal supports", but since $\anon$ isn’t a real constant, $\countMS{}$ cannot be called directly. Instead we have to count all "minimal supports" for $(\T,\exists R(c))$ that do not satisfy any $(\T,R(c,d))$ for $d\in \const(\A)$. Again this is the weight that we assign to $R(c,c_{\alpha})$ in $\overline{\D}^{(\T,q)}_{\A}$.
\end{proof}

\subsection{Putting everything together}
The construction has been presented in a progressive manner for ease of understanding. 
We now 
 recapitulate the argument in a more direct fashion.

\begin{proof}[Proof of \Cref{th:cfdllite}]
   The algorithm goes as follows. For every "connected component@@q" $q_c$ of $q$ with at least 2 atoms we:
   (1) build the "weighted database" $\D^{q_c}_{\A}$ described in \Cref{ssec:effsum} using the $\evalCountMS{(\dlliter,\AQ)}$ oracle;
   (2) extend $\D^{q_c}_{\A}$ into $\overline{\D}^{q_c}_{\A}$ as described in \Cref{ssec:anonc};
   (3) compute $q_c(\overline{\D}^{q_c}_{\A})$ using standard "weighted database" algorithms. 

   By \Cref{lem:DqAbar}, this yields the value of $\countMS{(\T,q_c)}(\A)$.
   The remaining "connected components@@q" consist in a single atom, so the corresponding value can then be obtained by a direct call to the $\evalCountMS{(\dlliter,\AQ)}$ oracle.
   Once all the values are obtained, we finally multiply them all together, which yields the desired $\countMS{(\T,q)}(\A)$ by \Cref{lem:cfvarcc}.

Regarding the consequence that $\evalCountFMS{\C} \in \FP$, this is a direct application of \Cref{prop:tractableAQ}
to the above.
\end{proof}


\section{Conclusion and Future Work}
\label{sec:conclusion}
Our work explores the recently introduced class of Shapley-based responsibility measures, known as WSMS, in the context of ontology-mediated query answering.
Our complexity analysis 
pinpoints sources of intractability but also identifies relevant classes of OMQs for which WSMS computation is tractable in data (and sometimes also combined) complexity and can moreover be computed using standard database systems. It would be interesting in future work to test out the approach in practice
and try to generalize the `interaction-free' condition to identify further tractable cases.

While we focused on DLs, 
many results extend to other ontology formalisms such as existential rules. In particular, the data tractability result extends to UCQ-rewritable rulesets, and the tractability result for atomic queries 
extends to bounded-arity linear existential rules because they satisfy the conditions of \Cref{prop:tractableAQ}. An interesting future step would be to see if a useful notion of `interaction-free' could be defined in order to obtain tractability in combined complexity for linear existential rules with CQs. 

\newpage

\section*{Acknowledgements}
This work was partially supported by ANR AI Chair INTENDED (ANR-19-CHIA-0014).

\bibliographystyle{kr}

\bibliography{full,biblio}

\end{document}